\newtheorem{theorem}{Theorem}
\newtheorem{prop}{Proposition}
\def\tens{\mathcal}
\def\vect{\mathbf}
\def\matr{\mathbf}
\def\mrm{\mathrm}
\newcommand{\tr}{\mathop{\mathrm{tr}}\nolimits}
\newcommand{\ve}{\mathrm{vec}}
\newcommand{\ag}{\mathsf{AG}}
\newcommand{\agn}{\mathsf{AGN}}
\newcommand{\capeagn}{\mathsf{capeAGN}}
\newcommand{\capepca}{\mathsf{capePCA}}
\newcommand{\cape}{\mathsf{CAPE}}
\newcommand{\avn}{\mathsf{AVN}}
\newcommand{\nonpriv}{\mathsf{non-dp\ pool}}
\newcommand{\nonprivT}{\mathsf{Non-priv.}}
\newcommand{\rand}{\mathsf{Rand.\ vect.}}
\newcommand{\conv}{\mathsf{conv}}
\newcommand{\local}{\mathsf{local}}
\newcommand\redout{\bgroup\markoverwith{\textcolor{red}{\rule[.5ex]{2pt}{0.4pt}}}\ULon}
\newtheorem{lemma}{Lemma}
\begin{document}

\title{Distributed Differentially-Private Algorithms for Matrix and Tensor Factorization}
\date{}
\author{Hafiz Imtiaz and Anand D. Sarwate\thanks{This work was partially supported by NSF under award CCF-1453432, DARPA and SSC Pacific under contract N66001-15-C-4070, and the NIH under award  1R01DA040487-01A1.}%
\thanks{The authors are with the Department of Electrical and Computer Engineering, Rutgers, The State University of New Jersey, 94 Brett Road, Piscataway, NJ 07302, \texttt{hafiz.imtiaz@rutgers.edu}, \texttt{anand.sarwate@rutgers.edu}}
}

%

\maketitle

\begin{abstract}
In many signal processing and machine learning applications, datasets containing private information are held at different locations, requiring the development of distributed privacy-preserving algorithms. Tensor and matrix factorizations are key components of many processing pipelines. In the distributed setting, differentially private algorithms suffer because they introduce noise to guarantee privacy. This paper designs new and improved distributed and differentially private algorithms for two popular matrix and tensor factorization methods: principal component analysis (PCA) and orthogonal tensor decomposition (OTD). The new algorithms employ a correlated noise design scheme to alleviate the effects of noise and can achieve the same noise level as the centralized scenario. Experiments on synthetic and real data illustrate the regimes in which the correlated noise allows performance matching with the centralized setting, outperforming previous methods and demonstrating that meaningful utility is possible while guaranteeing differential privacy.
\end{abstract}


\section{Introduction}\label{sec:intro}
Many signal processing and machine learning algorithms involve analyzing private or sensitive data. The outcomes of such algorithms may potentially leak information about individuals present in the dataset. A strong and cryptographically-motivated framework for protection against such information leaks is differential privacy~\cite{dwork2006}. Differential privacy measures privacy risk in terms of the probability of identifying individual data points in a dataset from the results of computations (algorithms) performed on that data.

In several modern applications the data is distributed over different locations or sites, with each site holding a smaller number of samples. For example, consider neuro-imaging analyses for mental health disorders, in which there are many individual research groups, each with a modest
number of subjects. Learning meaningful population properties or efficient feature representations from high-dimensional functional magnetic resonance imaging (fMRI) data requires a large sample size. Pooling the data at a central location may enable efficient feature learning, but privacy concerns and high communication overhead often prevent sharing the underlying data. Therefore, it is desirable to have efficient distributed algorithms that provide utility close to centralized case and also preserve privacy~\cite{SarwatePTAC:14sharing}. 

This paper focuses on the Singular Value Decomposition (SVD) or Principal Component Analysis (PCA), and orthogonal tensor decompositions.
Despite some limitations, PCA/SVD is one of the most widely-used preprocessing stages in any machine learning algorithm: it projects data onto a lower dimensional subspace spanned by the singular vectors of the second-moment matrix of the data.
Tensor decomposition is a powerful tool for inference algorithms because it can be used to infer complex dependencies (higher order moments) beyond second-moment methods such as PCA. This is particularly useful in latent variable models~\cite{anandkumar2012} such as mixtures of Gaussians and topic modeling.

%
%

\noindent\textbf{Related Works. } For a complete introduction to the history of tensor decompositions, see the comprehensive survey of Kolda and Bader~\cite{kolda2009} (see also Appendix~\ref{appendix:tensor_preliminaries}).
The CANDECOMP/PARAFAC, or CP decomposition~\cite{carroll1970,harshman1970} and Tucker decomposition~\cite{tucker1966} are generalizations of the matrix SVD to multi-way arrays.
While finding the decomposition of arbitrary tensors is computationally intractable, specially structured tensors appear in some latent variable models. Such tensors can be decomposed efficiently~\cite{anandkumar2012,kolda2009} using a variety of approaches such as generalizations of the power iteration~\cite{lathauwer2000}. Exploiting such structures in higher-order moments to estimate the parameters of latent variable models has been studied extensively using the so-called orthogonal tensor decomposition (OTD)~\cite{anandkumar2012,kakade2013,hsu2012,kolda2015}. To our knowledge, these decompositions have not been studied in the setting of distributed data.

Several distributed PCA algorithms~\cite{balcanOld, balcan2014, borgne2010, bai2005, macua2010,imtiazDPCA2018} have been proposed. Liang et al.~\cite{balcanOld} proposed a distributed PCA scheme where it is necessary to send both the left and right singular vectors along with corresponding singular values from each site to the aggregator. Feldman et al.~\cite{feldman2013} proposed an improvement upon this, where each site sends a $D\times R$ matrix to the aggregator. Balcan et al.~\cite{balcan2014} proposed a further improved version using fast sparse subspace embedding~\cite{clarkson2017} and randomized SVD~\cite{halko2011}. 

This paper proposes new privacy-preserving algorithms for distributed PCA and OTD and builds upon our earlier work on distributed differentially private eigenvector calculations~\cite{imtiazDPCA2018} and centralized differentially private OTD~\cite{imtiazOTD2018}. It improves on our preliminary works on distributed private PCA~\cite{imtiazDJICA2016, imtiazDPCA2018} in terms of efficiency and fault-tolerance. Wang and Anandkumar~\cite{anandkumar2016} recently proposed an algorithm for differentially private tensor decomposition using a noisy version of the tensor power iteration~\cite{anandkumar2012,lathauwer2000}. Their algorithm adds noise at each step of the iteration and the noise variance grows with the predetermined number of iterations. They also make the restrictive assumption that the input to their algorithm is orthogonally decomposable. Our centralized OTD algorithms~\cite{imtiazOTD2018} avoid these assumptions and achieve better empirical performance (although without theoretical guarantees). To our knowledge, this paper proposes the first differentially private orthogonal tensor decomposition algorithm for distributed settings.



\noindent\textbf{Our Contribution. }In this paper, we propose two new $(\epsilon,\delta)$-differentially private algorithms, $\capepca$ and $\capeagn$, for distributed differentially private principal component analysis and orthogonal tensor decomposition, respectively. 
The algorithms are inspired by the recently proposed correlation assisted private estimation ($\cape$) protocol~\cite{jafar2018} and input perturbation methods for differentially-private PCA~\cite{blum2005,dwork2014}.
The $\cape$ protocol improves upon conventional approaches, which suffer from excessive noise, at the expense of requiring a trusted ``helper'' node that can generate correlated noise samples for privacy. We extend the $\cape$ framework to handle site-dependent sample sizes and privacy requirements.
In $\capepca$, the sites share noisy second-moment matrix estimates to a central aggregator, whereas in $\capeagn$ the sites use a distributed protocol to compute a projection subspace used to enable efficient private OTD.
This paper is about algorithms with provable privacy guarantees and experimental validation. While asymptotic sample complexity guarantees are of theoretical interest, proving performance bounds for distributed subspace estimation is quite challenging. To validate our approach we show that our new methods outperform previously proposed approaches, even under strong privacy constraints. For weaker privacy requirements they can sometimes achieve the same performance as a pooled-data scenario. 

\section{Problems Using Distributed Private Data}\label{sec:problem_formulation}
\begin{figure}[t]
  \centering
  \includegraphics[width=1\columnwidth]{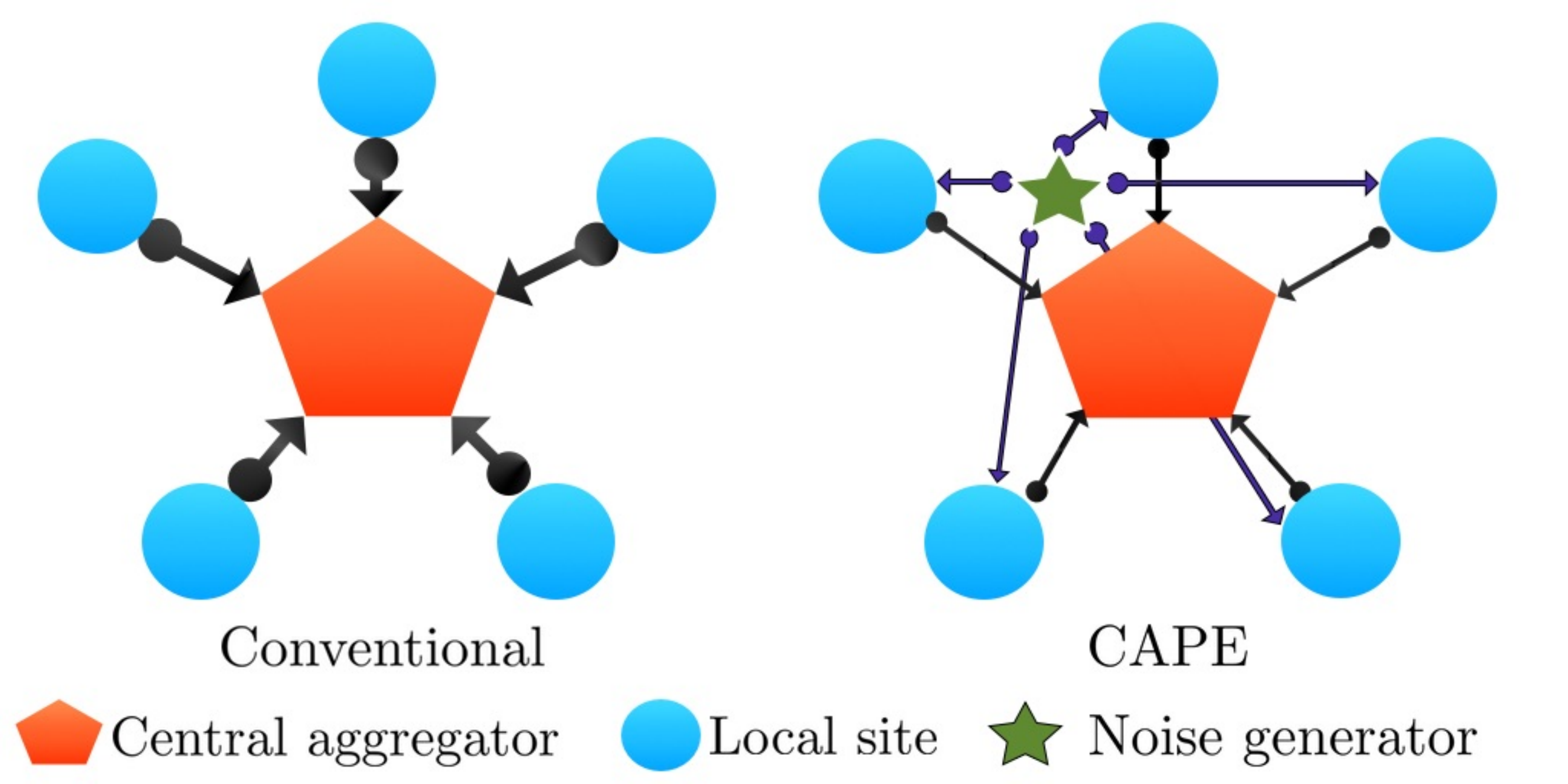}\\
  \vspace{-0.0in}
  \caption{The structure of the network: left -- conventional, right -- $\cape$}
  \label{fig:network_structure}
\end{figure}

\noindent \textbf{Notation.} We denote tensors with calligraphic scripts, e.g., $\tens{X}$, vectors with bold lower case letters, e.g., $\vect{x}$, and matrices with bold upper case letters, e.g. $\matr{X}$. Scalars are denoted with regular letters, e.g., $M$. Indices are denoted with lower case letters and they typically run from 1 to their upper-case versions, e.g., $m = 1, 2, \ldots, M$. We sometimes denote the set $\{1, 2, \ldots, M\}$ as $[M]$. The $n$-th column of the matrix $\matr{X}$ is denoted as $\vect{x}_n$. $\|\cdot\|_2$ denotes the Euclidean (or $\mathcal{L}_2$) norm of a vector and the spectral norm of a matrix. $\|\cdot\|_F$ denotes the Frobenius norm and $\tr(\cdot)$ denotes the trace operation.

\noindent\textbf{Distributed Data Model. }We assume that the data is distributed in $S$ sites, where each site $s \in [S]$ has a data matrix $\matr{X}_s \in \mathbb{R}^{D \times N_s}$. The data samples in the local sites are assumed to be disjoint. There is a central node that acts as an aggregator (see Figure \ref{fig:network_structure}). We denote $N = \sum_{s=1}^S N_s$ as the total number of samples over all sites. The data matrix $\matr{X}_s = \left[\vect{x}_{s,1}\ \ldots\ \vect{x}_{s,N_s}\right]$ at site $s$ is considered to contain the $D$-dimensional features of $N_s$ individuals. Without loss of generality, we assume that $\|\vect{x}_{s,n}\|_2 \leq 1\ \forall s \in [S]$ and $\forall n \in [N_s]$. If we had all the data in the aggregator (pooled data scenario), then the data matrix would be $\matr{X} = \left[\matr{X}_1\ \ldots\ \matr{X}_S\right] \in \mathbb{R}^{D\times N}$. Our goal is to approximate the performance of the pooled data scenario using distributed differentially private algorithms.

\noindent\textbf{Matrix and Tensor Factorizations. }We first formulate the problem of distributed PCA. For simplicity, we assume that the observed samples are mean-centered. The $D\times D$ sample second-moment matrix at site $s$ is $\matr{A}_s = \frac{1}{N_s} \matr{X}_s\matr{X}_s^\top$. In the pooled data scenario, the $D\times D$ positive semi-definite second-moment matrix is $\matr{A} =  \frac{1}{N} \matr{X} \matr{X}^{\top}$. According to the Schmidt approximation theorem \cite{stewart1993}, the rank-$K$ matrix $\matr{A}_K$ that minimizes the difference $\|\matr{A} - \matr{A}_K\|_F$ can be found by taking the SVD of $\matr{A}$ as $\matr{A} = \matr{V} \matr{\Lambda} \matr{V}^\top$, where without loss of generality we assume $\matr{\Lambda}$ is a diagonal matrix with entries $\{\lambda_d(\matr{A})\}$ and $\lambda_1(\matr{A}) \geq \ldots \geq \lambda_D(\matr{A}) \geq 0$. Additionally, $\matr{V}$ is a matrix of eigenvectors corresponding to the eigenvalues. The top-$K$ PCA subspace of $\matr{A}$ is the matrix $\matr{V}_K(\matr{A}) = \left[\vect{v}_1 \ldots \vect{v}_K\right]$. Given $\matr{V}_K(\matr{A})$ and the eigenvalue matrix $\matr{\Lambda}$, we can form an approximation $\matr{A}_K = \matr{V}_K(\matr{A}) \matr{\Lambda}_K \matr{V}_K(\matr{A})^{\top}$ to $\matr{A}$, where $\matr{\Lambda}_K$ contains the $K$ largest eigenvalues in $\matr{\Lambda}$. For a $D \times K$ matrix $\hat{\matr{V}}$ with orthonormal columns, the quality of $\hat{\matr{V}}$ in approximating $\matr{V}_K(\matr{A})$ can be measured by the \textit{captured energy} of $\matr{A}$ as $q(\hat{\matr{V}} ) = \tr(\hat{\matr{V}}^{\top} \matr{A} \hat{\matr{V}})$. The $\hat{\matr{V}}$, which maximizes $q(\hat{\matr{V}})$ is the subspace $\matr{V}_K(\matr{A})$. We are interested in approximating $\matr{V}_K(\matr{A})$ in a distributed setting while guaranteeing differential privacy.

Next, we describe the problem of orthogonal tensor decomposition (OTD). As mentioned before, decomposition of arbitrary tensors is usually mathematically intractable. However, some specially structured tensors that appear in several latent variable models can be efficiently decomposed~\cite{anandkumar2012} using a variety of approaches such as generalizations of the power iteration~\cite{lathauwer2000}. We review some basic definitions related to tensor decomposition~\cite{kolda2009} in Appendix \ref{appendix:tensor_preliminaries}. We start with formulating the problem of orthogonal decomposition of symmetric tensors and then continue on to distributed OTD. Due to page limitations, two examples of OTD from Anandkumar et al.~\cite{anandkumar2012}, namely the single topic model (STM) and the mixture of Gaussian (MOG), are presented in Appendix \ref{appendix:otd_examples}.

Let $\tens{X}$ be an $M$-way $D$ dimensional symmetric tensor. Given real valued vectors $\vect{v}_k \in \mathbb{R}^D$, Comon et al. \cite{comon2008} showed that there exists a decomposition of the form $\tens{X} = \sum_{k=1}^K \lambda_k \vect{v}_k \otimes \vect{v}_k \otimes \cdots \otimes \vect{v}_k$, where $\otimes$ denotes the outer product. Without loss of generality, we can assume that $\|\vect{v}_k\|_2 = 1$ $\forall k$. If we can find a matrix $\matr{V} = \left[\vect{v}_1\ldots \vect{v}_K\right] \in \mathbb{R}^{D \times K}$ with orthogonal columns, then we say that $\tens{X}$ has an orthogonal symmetric tensor decomposition \cite{kolda2015}. Such tensors are generated in several applications involving latent variable models. Recall that if $\matr{M} \in \mathbb{R}^{D \times D}$ is a symmetric rank-$K$ matrix then we know that the SVD of $\matr{M}$ is given by $\matr{M} = \matr{V} \matr{\Lambda} \matr{V}^\top = \sum_{k=1}^K \lambda_k \vect{v}_k \vect{v}_k^\top = \sum_{k=1}^K \lambda_k \vect{v}_k \otimes \vect{v}_k$, where $\matr{\Lambda} = \mbox{diag}\{\lambda_1, \lambda_2, \ldots, \lambda_K\}$ and $\vect{v}_k$ is the $k$-th column of the orthogonal matrix $\matr{V}$. As mentioned before, the orthogonal decomposition of a 3-rd order symmetric tensor $\tens{X} \in \mathbb{R}^{D \times D \times D}$ is a collection of orthonormal vectors $\{\vect{v}_k\}$ together with corresponding positive scalars $\{\lambda_k\}$ such that $\tens{X} = \sum_{k=1}^K \lambda_k \vect{v}_k \otimes \vect{v}_k \otimes \vect{v}_k$. Now, in a setting where the data samples are distributed over different sites, we may have local approximates $\tens{X}_s$. We intend to use these local approximates from all sites to find better and more accurate estimates of the $\{\vect{v}_k\}$, while preserving privacy.

\noindent\textbf{Differential Privacy. }An algorithm $\mathcal{A}(\mathbb{D})$ taking values in a set $\mathbb{T}$ provides $(\epsilon,\delta)$-differential privacy if
\begin{equation}\label{privacy_eqn}
\Pr[\mathcal{A}(\mathbb{D}) \in \mathbb{S}] \leq \exp(\epsilon) \Pr[\mathcal{A}(\mathbb{D'}) \in \mathbb{S}] + \delta,
\end{equation}
for all measurable $\mathbb{S} \subseteq \mathbb{T}$ and all data sets $\mathbb{D}$ and $\mathbb{D'}$ differing in a single entry (neighboring datasets). This definition essentially states that the probability of the output of an algorithm is not changed significantly if the corresponding database input is changed by just one entry. Here, $\epsilon$ and $\delta$ are privacy parameters, where low $\epsilon$ and $\delta$ ensure more privacy. Note that the parameter $\delta$ can be interpreted as the probability that the algorithm fails. For more details, see recent surveys~\cite{SarwateC:13survey} or the monograph of Dwork and Roth~\cite{dwork2013algorithmic}.

To illustrate, consider estimating the mean $f(\vect{x}) = \frac{1}{N} \sum_{n = 1}^N x_n$ of $N$ scalars $\vect{x} = [ x_1,\ldots, x_{N-1},\ x_N]^{\top}$ with each $x_i \in [0,1]$. A neighboring data vector $\vect{x'} = \left[x_1,\ldots, x_{N-1},\ x'_N\right]^\top$ differs in a single element. The sensitivity~\cite{dwork2006} $\max_{\vect{x}} \left|f(\vect{x}) - f(\vect{x'})\right|$ of the function $f(\vect{x})$ is $\frac{1}{N}$. Therefore, for $(\epsilon, \delta)$ differentially-private estimate of the average $a = f(\vect{x})$, we can follow the Gaussian mechanism~\cite{dwork2006} to release $\hat{a} = a + e$, where $e \sim \mathbb{N}\left(0, \tau^2\right)$ and $\tau = \frac{1}{N\epsilon}\sqrt{2\log \frac{1.25}{\delta}}$.


\noindent\textbf{Distributed Privacy-preserving Computation.} In our distributed setting, we assume that the sites are ``honest but curious.'' That is, the aggregator is not trusted and the sites can collude to get a hold of some site's data/function output. Existing approaches to distributed differentially private algorithms can introduce a significant amount of noise to guarantee privacy. Returning to the example of mean estimation, suppose now there are $S$ sites and each site $s$ holds a disjoint dataset $\vect{x}_s$ of $N_s$ samples for $s \in [S]$. A central aggregator wishes to estimate and publish the mean of all the samples. The sites can send estimates to the aggregator but may collude to learn the data of other sites based on the aggregator output. Without privacy, the sites can send $a_s = f(\vect{x}_s)$ to the aggregator and the average computed by aggregator ($a_\mrm{ag} = \frac{1}{S}\sum_{s=1}^S a_s$) is exactly equal to the average we would get if all the data samples were available in the aggregator node. For preserving privacy, a standard differentially private approach is for each site to send $\hat{a}_s = f(\vect{x}_s) + e_s$, where $e_s \sim \mathbb{N}\left(0, \tau_s^2\right)$ and $\tau_s = \frac{1}{N_s\epsilon}\sqrt{2\log \frac{1.25}{\delta}}$. The aggregator computes $a_\mrm{ag} = \frac{1}{S}\sum_{s=1}^S \hat{a}_s$. We observe $a_\mrm{ag} = \frac{1}{S}\sum_{s=1}^S \hat{a}_s = \frac{1}{S}\sum_{s=1}^S a_s + \frac{1}{S}\sum_{s=1}^S e_s$: note that this estimate is still noisy due to the privacy constraint. The variance of the estimator $a_\mrm{ag}$ is $S \cdot \dfrac{\tau_s^2}{S^2} = \dfrac{\tau_s^2}{S} \triangleq \tau_\mrm{ag}^2$. However, if we had all the data samples in the central aggregator, then we could compute the differentially-private average as $a_c = \frac{1}{N}\sum_{n=1}^N x_n + e_c$, where $e_c \sim \mathbb{N}\left(0, \tau_c^2\right)$ and $\tau_c = \frac{1}{N\epsilon}\sqrt{2\log \frac{1.25}{\delta}}$. If we assume that each site has equal number of samples then $N = S N_s$ and we have $\tau_c = \frac{1}{SN_s\epsilon}\sqrt{2\log \frac{1.25}{\delta}} = \dfrac{\tau_s}{S}$. We observe the ratio $\frac{\tau_c^2}{\tau_\mrm{ag}^2} = \frac{\tau_s^2\ /\ S^2}{\tau_s^2\ /\ S} = \frac{1}{S}$, showing that the conventional differentially-private distributed averaging scheme is always worse than the differentially-private pooled data case. 

\section{Correlated Noise Scheme}\label{sec:corr_noise}

The recently proposed Correlation Assisted Private Estimation ($\cape$)~\cite{jafar2018} scheme exploits the network structure and uses a correlated noise design to achieve the same performance of the pooled data case (i.e., $\tau_\mrm{ag} = \tau_c$) in the decentralized setting. We assume there is a trusted noise generator in addition to the central aggregator (see Figure \ref{fig:network_structure}). The local sites and the central aggregator can also generate noise. The noise generator and the aggregator can send noise to the sites through secure (encrypted) channels. The noise addition procedure is carefully designed to ensure the privacy of the algorithm output from each site and to achieve the noise level of the pooled data scenario in the final output from the central aggregator. Considering the same distributed averaging problem as in Section \ref{sec:problem_formulation}, the noise generator and central aggregator respectively send $e_s$ and $f_s$ to each site $s$. Site $s$ generates noise $g_s$ and releases/sends $\hat{a}_s = f(\vect{x}_s) + e_s + f_s + g_s$.
The noise generator generates $e_s$ such that $\sum_{s=1}^S e_s = 0$.  As shown in~\cite{jafar2018}, these noise terms are distributed according to $e_s \sim \mathcal{N}(0, \tau_e^2),\ f_s \sim \mathcal{N}(0, \tau_f^2)$, and $g_s \sim \mathcal{N}(0, \tau_g^2)$, where
\begin{align}\label{eqn:cape_noise_variance}
\tau^2_e &= \tau^2_f = \left(1-\frac{1}{S}\right)\tau^2_s,\mbox{ and } \tau_g^2 = \frac{\tau^2_s}{S}.
\end{align}
The aggregator computes $a_\mrm{ag}^\mrm{imp} = \frac{1}{S}\sum_{s=1}^S \left(\hat{a}_s - f_s\right) = \frac{1}{N} \sum_{n=1}^{N} x_n + \frac{1}{S} \sum_{s=1}^S g_s$, where we used $\sum_s e_s = 0$ and the fact that the aggregator knows the $f_s$, so it can subtract all of those from $\hat{a}_s$. The variance of the estimator $a_\mrm{ag}^\mrm{imp}$ is $S\cdot\frac{\tau_g^2}{S^2} = \frac{\tau_s^2}{S^2} = \tau^2_c$, which is the same as if all the data were present at the aggregator. This claim is formalized in Lemma \ref{lemma:cape}. We show the complete algorithm in Algorithm \ref{alg:dp_avg} (Appendix \ref{appendix:cape:lemma}). Privacy follows from previous work~\cite{jafar2018}, and if $S > 2$ and number of trusted sites (the sites that would not collude with any adversary) $S_\mrm{tr} \geq 2$, the aggregator does not need to generate $f_s$.





%
%

\begin{prop}\label{prop:gain}(Performance gain~\cite{jafar2018}) Consider the gain function $G(\mathbf{n}) = \frac{\tau_\mrm{ag}^2}{{\tau_\mrm{ag}^\mrm{imp}}^2} = \frac{N^2}{S^2}\sum^S_{s=1}\frac{1}{N_s^2}$ with $\vect{n} = \left[N_1,\ldots, N_S\right]$. Then: 
\begin{itemize}
\item the minimum $G(\mathbf{n})$ is $S$ and is achieved when $\vect{n} = \left[\frac{N}{S}, \ldots, \frac{N}{S}\right]$
\item the maximum $G(\mathbf{n})$ is $\frac{N^2}{S^2}\left(\frac{1}{(N-S+1)^2}+S-1\right)$, which occurs when $\vect{n} = \left[1, \ldots, 1, N-S+1\right]$
\end{itemize}
\end{prop}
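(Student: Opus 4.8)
The plan is to read this as a pair of constrained optimization problems. Writing $f(\mathbf{n}) = \sum_{s=1}^S N_s^{-2}$, we have $G(\mathbf{n}) = \frac{N^2}{S^2} f(\mathbf{n})$, so it suffices to minimize and maximize $f$ over the polytope $P = \{\mathbf{n} \in \mathbb{R}^S : N_s \geq 1 \text{ for all } s,\ \sum_{s=1}^S N_s = N\}$; here $N \geq S$ because every site holds at least one sample. I will carry out the real-valued optimization over $P$ and then observe that the optimizers are integral, so the integrality constraint on the $N_s$ does not change the answer.

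For the lower bound I would use convexity of $t \mapsto t^{-2}$ on $(0,\infty)$ in the form of the standard mean inequalities. By Jensen applied to $x \mapsto x^2$ and then AM--HM,
\begin{equation*}
\frac{1}{S}\sum_{s=1}^S \frac{1}{N_s^2} \;\geq\; \left(\frac{1}{S}\sum_{s=1}^S \frac{1}{N_s}\right)^{\!2} \;\geq\; \left(\frac{S}{\sum_{s=1}^S N_s}\right)^{\!2} \;=\; \frac{S^2}{N^2},
\end{equation*}
so $f(\mathbf{n}) \geq S^3/N^2$, with equality exactly when all $N_s$ are equal, i.e. $N_s = N/S$ (which lies in $P$ since $N/S \geq 1$). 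Multiplying by $N^2/S^2$ gives $G(\mathbf{n}) \geq S$, attained at $\mathbf{n} = (N/S,\ldots,N/S)$, which is the first bullet.

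For the upper bound the key fact is that $f$ is convex and $P$ is a compact convex polytope, so $f$ attains its maximum over $P$ at an extreme point of $P$. I would then characterize the vertices: since $P$ lives in the $(S-1)$-dimensional affine hyperplane $\sum_s N_s = N$, a vertex requires $S-1$ of the bound constraints $N_s \geq 1$ to be active; fixing those $S-1$ coordinates to $1$, the equality constraint forces the remaining coordinate to equal $N - S + 1$. Hence, up to permutation, the unique vertex is $(1,\ldots,1,N-S+1)$ (collapsing to $(1,\ldots,1)$ in the degenerate case $N = S$). Evaluating, $f = (S-1) + (N-S+1)^{-2}$, and multiplication by $N^2/S^2$ yields the stated maximum. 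Because this maximizer has integer coordinates, the bound is unchanged when $\mathbf{n}$ is restricted to positive integers.

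The main obstacle is not a hard computation but the care needed in the maximization step: one must justify the principle that a convex function on a polytope is maximized at a vertex, correctly enumerate the vertex set of $P$ (including the $N=S$ degeneracy), and note that the integral nature of the extremal configuration means the discrete and continuous problems share the same optimum. The minimization step is routine once convexity is invoked.
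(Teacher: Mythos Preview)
Your argument is correct. The paper itself does not spell out a proof; it only remarks that the result ``is a consequence of Schur convexity'' and cites~\cite{jafar2018}. That route observes that $f(\mathbf{n})=\sum_s N_s^{-2}$ is a symmetric sum of the convex function $t\mapsto t^{-2}$ and is therefore Schur-convex, so on the simplex $\{\mathbf{n}:N_s\ge 1,\ \sum_s N_s=N\}$ it is minimized at the majorization-least element $(N/S,\ldots,N/S)$ and maximized at the majorization-greatest element $(1,\ldots,1,N-S+1)$; both bullets then drop out at once from a single structural fact. Your approach instead splits the two bullets: Jensen/AM--HM for the minimum, and the ``convex function on a polytope is maximized at a vertex'' principle plus an explicit vertex enumeration for the maximum. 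The Schur-convexity argument is more unified and immediately identifies the extremizers via majorization, while your route is more self-contained (no majorization machinery) and makes the role of the box constraints $N_s\ge 1$ in the maximization completely explicit. Either is acceptable here.
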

\begin{proof}
The proof is a consequence of Schur convexity and is given in~\cite{jafar2018}.
\end{proof}

\subsection{Extension of $\cape$ to Unequal Privacy Requirements}\label{sec:weighted_dp_avg}
We now propose a generalization of the $\cape$ scheme, which applies to scenarios where different sites have different privacy requirements and/or sample sizes. Additionally, sites may have different ``quality notions'', i.e., while combining the site outputs at the aggregator, the aggregator can decide to use different weights to different sites (possibly according to the quality of the output from a site). Let us assume that site $s$ requires $(\epsilon_s, \delta_s)$-differential privacy guarantee for its output. According to the Gaussian mechanism~\cite{dwork2006}, the noise to be added to the (non-private) output of site $s$ should have standard deviation given by $\tau_s = \frac{1}{N_s\epsilon_s}\sqrt{2\log\frac{1.25}{\delta_s}}$. We need that site $s$ outputs $\hat{a}_s = f(\vect{x}_s) + e_s + f_s + g_s$. Here, $g_s \sim \mathcal{N}(0, \tau_{gs}^2)$ is generated locally, $e_s \sim \mathcal{N}(0, \tau_{es}^2)$ is generated from the random noise generator, and $f_s \sim \mathcal{N}(0, \tau_{fs}^2)$ is generated in the central aggregator. We need to satisfy
\begin{align*}
\tau^2_{fs + gs} 	&= \tau_{fs}^2 + \tau_{gs}^2 \geq \tau_s^2,\mbox{ and } \tau^2_{es + gs} = \tau_{es}^2 + \tau_{gs}^2 \geq \tau_s^2.
\end{align*}
As mentioned before, the aggregator can decide to compute a weighted average with weights selected according to some quality measure of the site's data/output (e.g., if the aggregator knows that a particular site is suffering from more noisy observations than other sites, it can choose to give the output from that site less weight while combining the site results). Let us denote the weights by $\{\mu_s\}$ such that $\sum_{s=1}^S \mu_s = 1$ and $\mu_s \geq 0$. Note that, our proposed generalized $\cape$ reduces to the existing~\cite{jafar2018} $\cape$ for $\mu_s = \frac{N_s}{N}$. The aggregator computes
\begin{align*}
a_\mrm{ag}^\mrm{imp} 	&= \sum_{s=1}^S \mu_s \left(\hat{a}_s - f_s\right) = \sum_{s=1}^S \mu_s a_s+ \sum_{s=1}^S \mu_s e_s + \sum_{s=1}^S \mu_s g_s.
\end{align*}
In accordance with our goal of achieving the same level of noise as the pooled data scenario, we need $\text{var}\left[\sum_{s=1}^S \mu_s g_s\right] = \tau_c^2 \implies \sum_{s=1}^S \mu_s^2 \tau_{gs}^2 = \tau_c^2$. Additionally, we need $\sum_{s=1}^S \mu_s e_s = 0$. With these constraints, we can formulate a feasibility problem to solve for the unknown noise variances $\{\tau_{es}^2, \tau_{gs}^2, \tau_{fs}^2\}$ as
\begin{align*}
	\underset{}{\text{minimize}} 	&\qquad 0 \\
				\text{subject to} 	&\qquad \tau_{fs}^2 + \tau_{gs}^2 \geq \tau_s^2,\ \tau_{es}^2 + \tau_{gs}^2 \geq \tau_s^2, \\
                                    &\qquad \sum_{s=1}^S \mu_s^2 \tau_{gs}^2 = \tau_c^2, \sum_{s=1}^S \mu_s e_s = 0,
\end{align*}
for all $s\in [S]$, where $\{\mu_s\}$, $\tau_c$ and $\tau_s$ are known to the aggregator. For this problem, multiple solutions are possible. We present one solution here that solves the problem with equality. For the $S$-th site:
\begin{align*}
\tau_{eS}^2 &= \tau_{fS}^2 = \frac{\tau_S^2}{2} - \frac{1}{2\mu_S^2}\left(\tau_c^2 - \sum_{s=1}^{S-1}\mu_s^2\tau_s^2\right) \\
\tau_{gS}^2 &= \frac{\tau_S^2}{2} + \frac{1}{2\mu_S^2}\left(\tau_c^2 - \sum_{s=1}^{S-1}\mu_s^2\tau_s^2\right).
\end{align*}
For other sites $s \in [S-1]$:
\begin{align*}
\tau_{es}^2 &= \tau_{fs}^2 = \frac{1}{\mu_s^2(S-1)}\left[\frac{\mu_S^2}{2}\tau_S^2 - \frac{1}{2}\left(\tau_c^2 - \sum_{s=1}^{S-1}\mu_s^2\tau_s^2\right)\right] \\
\tau_{gs}^2 &=\tau_s^2 - \frac{1}{\mu_s^2(S-1)}\left[\frac{\mu_S^2}{2}\tau_S^2 - \frac{1}{2}\left(\tau_c^2 - \sum_{s=1}^{S-1}\mu_s^2\tau_s^2\right)\right].
\end{align*}
The derivation of this solution is shown in Appendix \ref{appendix:unequal_privacy}.

\section{Improved Distributed Differentially-private Principal Component Analysis}\label{sec:dist_dppca}
\begin{algorithm}[t] 
	\caption{Improved Distributed Differentially-private PCA ($\capepca$) \label{alg:dist_dpca}}
	\begin{algorithmic}[1]
    \Require Data matrix $\matr{X}_s \in \mathbb{R}^{D\times N_s}$ for $s \in [S]$; privacy parameters $\epsilon$, $\delta$; reduced dimension $K$
    \State At random noise generator: generate $\matr{E}_s \in \mathbb{R}^{D \times D}$, as \hyperlink{target:generate_E}{described} in the text; send to sites
    \State At aggregator: generate $\matr{F}_s \in \mathbb{R}^{D \times D}$, as \hyperlink{target:generate_F}{described} in the text; send to sites
    \For{$s = 1, 2, \ldots, S$} \Comment{at the local sites}
    		\State Compute $\matr{A}_s \gets \frac{1}{N_s} \matr{X}_s \matr{X}_s^\top$
    	\State Generate $D \times D$ symmetric matrix $\matr{G}_s$, as \hyperlink{target:generate_G}{described} in the text
    	\State Compute $\hat{\matr{A}}_s \gets \matr{A}_s + \matr{E}_s + \matr{F}_s + \matr{G}_s$; send $\hat{\matr{A}}_s$ to aggregator
    \EndFor 
    \State Compute $\hat{\matr{A}} \gets \frac{1}{S}\sum_{s=1}^S \left(\hat{\matr{A}}_s - \matr{F}_s\right)$ \Comment{at the aggregator}
    \State Perform SVD: $\hat{\matr{A}} = \matr{V} \matr{\Lambda} \matr{V}^\top$
    \State Release / send to sites: $\matr{V}_K$\\
    \Return $\matr{V}_K$
    \end{algorithmic}
\end{algorithm}
In this section, we propose an improved distributed differentially-private PCA algorithm that takes advantage of the $\cape$ protocol. Recall that in our distributed PCA problem, we are interested in approximating $\matr{V}_K(\matr{A})$ in a distributed setting while guaranteeing differential privacy. One na\"{i}ve approach (non-private) would be to send the data matrices from the sites to the aggregator. When $D$ and/or $N_s$ are large, this entails a huge communication overhead. In many scenarios the local data are also private or sensitive. As the aggregator is not trusted, sending the data to the aggregator can result in a significant privacy violation. Our goals are therefore to reduce the communication cost, ensure differential privacy, and provide a close approximation to the true PCA subspace $\matr{V}_K(\matr{A})$. We previously proposed a differentially-private distributed PCA scheme~\cite{imtiazDPCA2018}, but the performance of the scheme is limited by the larger variance of the additive noise at the local sites due to the smaller sample sizes. We intend to alleviate this problem using the correlated noise scheme~\cite{jafar2018}. The improved distributed differentially-private PCA algorithm $(\capepca)$ we propose here achieves the same utility as the pooled data scenario. 

We consider the same network structure as in Section \ref{sec:corr_noise}: there is a random noise generator that can generate and send noise to the sites through an encrypted/secure channel. The aggregator can also generate noise and send those to the sites over encrypted/secure channels. Recall that in the pooled data scenario, we have the data matrix $\matr{X}$ and the sample second-moment matrix $\matr{A} = \frac{1}{N}  \matr{X}\matr{X}^\top$. We refer to the top-$K$ PCA subspace of this sample second-moment matrix as the true (or optimal) subspace $\matr{V}_K(\matr{A})$. At each site, we compute the sample second-moment matrix as $\matr{A}_s = \frac{1}{N_s} \matr{X}_s \matr{X}_s^\top$. The $\mathcal{L}_2$ sensitivity~\cite{dwork2006} of the function $f(\matr{X}_s) = \matr{A}_s$ is $\Delta_2^s = \frac{1}{N_s}$~\cite{dwork2014}. In order to approximate $\matr{A}_s$ satisfying $(\epsilon, \delta)$ differential privacy, we can employ the $\ag$ algorithm~\cite{dwork2014} to compute $\hat{\matr{A}}_s = \matr{A}_s + \matr{G}_s$, where the symmetric matrix $\matr{G}_s$ is generated with entries i.i.d. $\sim \mathcal{N}(0, \tau_s^2)$ and $\tau_s = \frac{\Delta_2^s}{\epsilon}\sqrt{2\log\frac{1.25}{\delta}}$. Note that, in the pooled data scenario, the $\mathcal{L}_2$ sensitivity of the function $f(\matr{X}) = \matr{A}$ is $\Delta_2^\mrm{pool} = \frac{1}{N}$. Therefore, the required additive noise standard deviation should satisfy $\tau_\mrm{pool} = \frac{\Delta_2^\mrm{pool}}{\epsilon}\sqrt{2\log\frac{1.25}{\delta}} = \frac{\tau_s}{S}$, assuming equal number of samples in the sites. As we want the same utility as the pooled data scenario, we compute the following at each site $s$:
\begin{align*}
\hat{\matr{A}}_s &= \matr{A}_s + \matr{E}_s + \matr{F}_s + \matr{G}_s.
\end{align*}
Here, the noise generator \hypertarget{target:generate_E}{generates} the $D\times D$ matrix $\matr{E}_s$ with $[\matr{E}_s]_{ij}$ drawn i.i.d. $\sim \mathcal{N}(0,\tau_e^2)$ and $\sum_{s=1}^S \matr{E}_s = 0$. We set the variance $\tau_e^2$ according to \eqref{eqn:cape_noise_variance} as $\tau_e^2 = \left(1-\frac{1}{S}\right) \tau_s^2$. Additionally, the aggregator \hypertarget{target:generate_F}{generates} the $D\times D$ matrix $\matr{F}_s$ with $[\matr{F}_s]_{ij}$ drawn i.i.d. $\sim \mathcal{N}(0,\tau_f^2)$. The variance $\tau_f^2$ is set according to \eqref{eqn:cape_noise_variance} as $\tau_f^2 = \left(1-\frac{1}{S}\right) \tau_s^2$. Finally, the sites \hypertarget{target:generate_G}{generate} their own symmetric $D \times D$ matrix $\matr{G}_s$, where $[\matr{G}_s]_{ij}$ are drawn i.i.d. $\sim \mathcal{N}(0, \tau_g^2)$ and $\tau_g^2 = \frac{1}{S} \tau_s^2$ according to \eqref{eqn:cape_noise_variance}. Note that, these variance assignments can be readily modified to fit the unequal  privacy/sample size scenario (Section \ref{sec:weighted_dp_avg}). However, for simplicity, we are considering the equal sample size scenario. Now, the sites send their $\hat{\matr{A}}_s$ to the aggregator and the aggregator \hypertarget{target:A_hat}{computes} 
\begin{align*}
\hat{\matr{A}} &= \frac{1}{S} \sum_{s=1}^S \left(\hat{\matr{A}}_s - \matr{F}_s\right) = \frac{1}{S} \sum_{s=1}^S \left(\matr{A}_s + \matr{G}_s\right),
\end{align*}
where we used the relation $\sum_{s=1}^S \matr{E}_s = 0$. The detailed calculation is shown in Appendix \ref{appendix:A_hat}. We note that at the aggregator, we end up with an estimator with noise variance exactly the same as that of the pooled data scenario. Next, we perform SVD on $\hat{\matr{A}}$ and release the top-$K$ eigenvector matrix $\matr{V}_K$, which is the $(\epsilon, \delta)$ differentially private approximate to the true subspace $\matr{V}_K(\matr{A})$. To achieve the same utility level as the pooled data case, we chose to send the full matrix $\hat{\matr{A}}_s$ from the sites to the aggregator instead of the partial square root of it~\cite{imtiazDPCA2018}. This increases the communication cost by $SD(D-R)$, where $R$ is the intermediate dimension of the partial square root. This can be thought of as the cost of gain in performance. 

\begin{theorem}[Privacy of $\capepca$ Algorithm]\label{thm:dp_dpca}
Algorithm \ref{alg:dist_dpca} computes an $(\epsilon,\delta)$ differentially private approximation to the optimal subspace $\matr{V}_K(\matr{A})$.
\end{theorem}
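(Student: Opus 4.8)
The plan is to derive the privacy of the released subspace $\matr{V}_K$ from the privacy of the individual site messages, using post‑processing invariance. First I would set up the reduction. Since the local datasets $\matr{X}_1,\dots,\matr{X}_S$ are disjoint, two neighboring global datasets differ in a single column of exactly one $\matr{X}_s$, and the only data‑dependent object site $s$ ever transmits is $\hat{\matr{A}}_s = \matr{A}_s + \matr{E}_s + \matr{F}_s + \matr{G}_s$. The aggregator's steps — forming $\hat{\matr{A}} = \frac{1}{S}\sum_{s}(\hat{\matr{A}}_s - \matr{F}_s)$, taking its SVD, and truncating to $\matr{V}_K$ — use only the messages $\{\hat{\matr{A}}_s\}$ together with the data‑independent noise $\{\matr{F}_s\}$, hence constitute a (randomized) post‑processing map. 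So it suffices to prove that, for each $s$, the message $\hat{\matr{A}}_s$ is $(\epsilon,\delta)$‑differentially private with respect to $\matr{X}_s$ in the view of any admissible honest‑but‑curious coalition.

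Next I would calibrate the Gaussian mechanism. The per‑site map $f(\matr{X}_s) = \matr{A}_s = \frac{1}{N_s}\matr{X}_s\matr{X}_s^\top$ has $\mathcal{L}_2$‑sensitivity $\Delta_2^s = \frac{1}{N_s}$ — a single‑column change perturbs $\matr{A}_s$ by a symmetric matrix of rank at most $2$, the bound underlying the $\ag$ mechanism of~\cite{dwork2014}. By the Gaussian/$\ag$ mechanism, adding to $\matr{A}_s$ a symmetric matrix whose on‑and‑above‑diagonal entries are i.i.d.\ $\mathcal{N}(0,\tau^2)$ yields $(\epsilon,\delta)$‑DP whenever $\tau \ge \frac{\Delta_2^s}{\epsilon}\sqrt{2\log\frac{1.25}{\delta}} = \tau_s$. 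Everything therefore reduces to showing that the part of $\matr{E}_s + \matr{F}_s + \matr{G}_s$ that the coalition cannot cancel has per‑entry variance at least $\tau_s^2$.

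Then comes the correlated‑noise accounting, which is Lemma~\ref{lemma:cape} applied entrywise. Against the untrusted aggregator, $\matr{F}_s$ is known and removable but $\matr{E}_s,\matr{G}_s$ are not, and by the assignment in~\eqref{eqn:cape_noise_variance}, $\mathrm{var}([\matr{E}_s + \matr{G}_s]_{ij}) = \tau_e^2 + \tau_g^2 = (1-\tfrac1S)\tau_s^2 + \tfrac1S\tau_s^2 = \tau_s^2$; against a coalition of curious sites that excludes the aggregator, $\matr{F}_s$ is unknown and $\mathrm{var}([\matr{F}_s + \matr{G}_s]_{ij}) = \tau_f^2 + \tau_g^2 = \tau_s^2$; and the zero‑sum constraint $\sum_s \matr{E}_s = 0$, generated by the trusted noise generator, prevents any coalition leaving at least two honest sites from reconstructing an honest $\matr{E}_s$ — exactly the role of the $S>2$, $S_{\mathrm{tr}}\ge 2$ hypotheses invoked after Lemma~\ref{lemma:cape} (and of the variant of~\cite{jafar2018} that omits $\matr{F}_s$ in the mixed case). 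In every case the residual per‑entry variance is $\ge \tau_s^2$, so each $\hat{\matr{A}}_s$, hence the joint transcript, is $(\epsilon,\delta)$‑DP, and post‑processing gives that $\matr{V}_K$ is an $(\epsilon,\delta)$‑DP approximation of $\matr{V}_K(\matr{A})$. For completeness one also records $\hat{\matr{A}} = \matr{A} + \frac{1}{S}\sum_s\matr{G}_s$ after the cancellations (Appendix~\ref{appendix:A_hat}), recovering the pooled‑data noise level — an accuracy statement not needed for privacy.

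I expect the collusion bookkeeping in the third step to be the real obstacle: one must track precisely which noise terms each possible coalition knows or can infer from the dependency $\sum_s\matr{E}_s = 0$, and argue the conditional variance of the surviving mask never drops below $\tau_s^2$ under the stated trust assumptions; this is the content of the CAPE privacy argument~\cite{jafar2018} that we reuse. A second, minor point is consistency of conventions: since $\matr{E}_s,\matr{F}_s,\matr{G}_s$ carry noise only on the upper triangle, one should check this matches the sensitivity/noise convention under which the $\ag$ mechanism of~\cite{dwork2014} is stated, so that the calibration $\tau_s = \frac{\Delta_2^s}{\epsilon}\sqrt{2\log\frac{1.25}{\delta}}$ is the correct one — routine, but worth stating.
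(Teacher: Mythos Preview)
Your proposal is correct and follows essentially the same approach as the paper: verify that the residual per-entry noise variance at each site satisfies $\tau_e^2+\tau_g^2=\tau_f^2+\tau_g^2=\tau_s^2$, invoke the Gaussian/$\ag$ mechanism with sensitivity $\Delta_2^s=1/N_s$ to conclude each $\hat{\matr{A}}_s$ is $(\epsilon,\delta)$-DP, and then apply post-processing invariance (using disjointness of the local datasets) to the aggregator's computation of $\matr{V}_K$. Your write-up is in fact more careful than the paper's own proof about the coalition analysis and the role of the $S_{\mathrm{tr}}\ge 2$ assumption, which the paper simply defers to~\cite{jafar2018}.
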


\begin{proof}[Proof] The proof of Theorem \ref{thm:dp_dpca} follows from using the Gaussian mechanism~\cite{dwork2006}, the AG algorithm~\cite{dwork2014}, the bound on $\|\matr{A}_s - \matr{A'}_s\|_2$ and recalling that the data samples in each site are disjoint. We start by showing that 
\begin{align*}
\tau_e^2 + \tau_g^2 &= \tau_g^2 + \tau_f^2 = \tau_s^2 = \left(\frac{1}{N_s\epsilon}\sqrt{2\log\frac{1.25}{\delta}}\right)^2.
\end{align*}
Therefore, the computation of $\hat{\matr{A}}_s$ at each site is at least $(\epsilon, \delta)$ differentially-private. As differential privacy is invariant under post-processing, we can combine the noisy second-moment matrices $\hat{\matr{A}}_s$ at the aggregator while subtracting $\matr{F}_s$ for each site. By the correlated noise generation at the random noise generator, the noise $\matr{E}_s$ cancels out. We perform the SVD on $\hat{\matr{A}}$ and release $\matr{V}_K$. The released subspace $\matr{V}_K$ is thus the $(\epsilon, \delta)$ differentially private approximate to the true subspace $\matr{V}_K(\matr{A})$.
\end{proof}

\noindent\textbf{Performance Gain with Correlated Noise. }\hypertarget{target:dpca_perf_gain}{The} distributed differentially-private PCA algorithm of~\cite{imtiazDPCA2018} essentially employs the conventional averaging (when each site sends the full $\hat{\matr{A}}_s$ to the aggregator). Therefore, the gain in performance of the proposed $\capepca$ algorithm over the one in~\cite{imtiazDPCA2018} is the same as shown in Proposition \ref{prop:gain}.

\noindent\textbf{Theoretical Performance Guarantee. }Due to the application of the correlated noise protocol, we achieve the same level of noise at the aggregator in the distributed setting as we would have in the pooled data scenario. In essence, the proposed $\capepca$ algorithm can achieve the same performance as the $\ag$ algorithm~\cite{dwork2014} modified to account for all the samples across all the sites. Here, we present three guarantees for the captured energy, closeness to the true subspace and low-rank approximation. The guarantees are adopted from Dwork et al.~\cite{dwork2014} and modified to fit our setup and notation. Let us assume that the $(\epsilon, \delta)$ differentially-private subspace output from Algorithm \ref{alg:dist_dpca} and the true subspace are denoted by $\hat{\matr{V}}_K$ and $\matr{V}_K$, respectively. We denote the singular values of $\matr{X}$ with $\sigma_1 \geq \ldots\geq\sigma_D$ and the un-normalized second-moment matrix with $\matr{A} = \matr{X}\matr{X}^\top$. Let $\matr{A}_K$ and $\hat{\matr{A}}_K$ be the true and the $(\epsilon, \delta)$ differentially-private rank-$K$ approximates to $\matr{A}$, respectively. If we assume that the gap $\sigma_K^2 - \sigma_{K+1}^2 = \omega(\tau_\mrm{pool}\sqrt{D})$, then the following holds
\begin{itemize}
\item $\tr\left(\hat{\matr{V}}_K^\top\matr{A}\hat{\matr{V}}_K\right) \geq \tr\left(\matr{V}_K^\top\matr{A}\matr{V}_K\right) - O(\tau_\mrm{pool}K\sqrt{D})$
\item $\left\|\matr{V}_K\matr{V}_K^\top - \hat{\matr{V}}_K\hat{\matr{V}}_K^\top\right\|_2 = O\left(\frac{\tau_\mrm{pool}\sqrt{D}}{\sigma_K^2 - \sigma_{K+1}^2}\right)$
\item $\|\matr{A} - \hat{\matr{A}}_K\|_2 \leq \|\matr{A} - \matr{A}_K\|_2 + O(\tau_\mrm{pool}\sqrt{D})$.
\end{itemize}
The detailed proofs can be found in Dwork et al.~\cite{dwork2014}.

\noindent\textbf{Communication Cost. }We quantify the total communication cost associated with the proposed $\capepca$ algorithm. Recall that $\capepca$ is an one-shot algorithm. Each of the random noise generator and the aggregator send one $D\times D$ matrix to the sites. Each site uses these to compute the noisy estimate of the local second-moment matrix ($D\times D$) and sends that back to the aggregator. Therefore, the total communication cost is proportional to $3SD^2$ or $O(D^2)$. This is expected as we are computing the global $D\times D$ second-moment matrix in a distributed setting before computing the PCA subspace.

\section{Distributed Differentially-private Orthogonal Tensor Decomposition}\label{sec:dist_dpotd}
\begin{algorithm}[t] 
	\caption{Distributed Differentially-private OTD ($\capeagn$) \label{alg:distagn}}
	\begin{algorithmic}[1]
    \Require Sample second-order moment matrices $\matr{M}_2^s \in \mathbb{R}^{D\times D}$ and third-order moment tensors $\tens{M}_3^s \in \mathbb{R}^{D \times D \times D}$ $\forall s \in [S]$, privacy parameters $\epsilon_1$, $\epsilon_2$, $\delta_1$, $\delta_2$, reduced dimension $K$
    \State At random noise generator: generate $\matr{E}_2^s \in \mathbb{R}^{D \times D}$ and $\tens{E}_3^s \in \mathbb{R}^{D \times D \times D}$, as \hyperlink{target:generate_E2}{described} in the text; send to sites
		    
    \State At aggregator: generate $\matr{F}_2^s \in \mathbb{R}^{D \times D}$ and $\tens{F}_3^s \in \mathbb{R}^{D \times D \times D}$, as \hyperlink{target:generate_F2}{described} in the text; send to sites
	    
    \For{$s = 1,\ \ldots,\ S$}\Comment{at the local sites}
        \State Generate $\matr{G}_2^s \in \mathbb{R}^{D\times D}$, as \hyperlink{target:generate_G2}{described} in the text
        \State Compute $\hat{\matr{M}}_2^s \gets \matr{M}_2^s + \matr{E}_2^s + \matr{F}_2^s + \matr{G}_2^s$; send $\hat{\matr{M}}_2^s$ to aggregator
    \EndFor
    \State Compute $\hat{\matr{M}}_2 \gets \frac{1}{S}\sum_{s=1}^S \left(\hat{\matr{M}}_2^s - \matr{F}_2^s\right)$ and then SVD$(K)$ of $\hat{\matr{M}}_2$ as $\hat{\matr{M}}_2 = \matr{U}\matr{D}\matr{U}^\top$\Comment{at the aggregator}
    \State Compute and send to sites: $\matr{W} \gets \matr{U}\matr{D}^{-\frac{1}{2}}$
    \For{$s = 1,\ \ldots,\ S$}\Comment{at the local sites}
        	\State Generate symmetric $\tens{G}_3^s \in \mathbb{R}^{D\times D \times D}$ from the entries of $\vect{b}\in \mathbb{R}^{D_\mrm{sym}}$, where $[\vect{b}]_d \sim \mathcal{N}(0, \tau_{3g}^2)$ and $\tau_{3g}^2 = \frac{1}{S}{\tau_3^s}^2$
        	\State Compute $\hat{\tens{M}}_3^s \gets \tens{M}_3^s + \tens{E}_3^s + \tens{F}_3^s + \tens{G}_3^s$ and $\tilde{\tens{M}}_3^s \gets \hat{\tens{M}}_3^s\left(\matr{W}, \matr{W}, \matr{W}\right)$; send $\tilde{\tens{M}}_3^s$ to aggregator
    \EndFor
    \State Compute $\tilde{\tens{M}}_3 \gets \frac{1}{S}\sum_{s=1}^S \left(\tilde{\tens{M}}_3^s - \tens{F}_3^s\left(\matr{W}, \matr{W}, \matr{W}\right)\right)$\Comment{at the aggregator}\\
    \Return The differentially private orthogonally decomposable tensor $\tilde{\tens{M}}_{3} $, projection subspace $\matr{W}$
    \end{algorithmic}
\end{algorithm}

In this section, we propose an algorithm $(\capeagn)$ for distributed differentially-private OTD. The proposed algorithm takes advantage of the correlated noise design scheme (Algorithm \ref{alg:dp_avg})~\cite{jafar2018}. To our knowledge, this is the first work on distributed differentially-private OTD. Due to page limits, the definition of the differentially-private OTD and the description of two recently proposed differentially-private OTD algorithms~\cite{imtiazOTD2018} are presented in Appendix \ref{appendix:dpotd}.

We start with recalling that the orthogonal decomposition of a 3-rd order symmetric tensor $\tens{X} \in \mathbb{R}^{D \times D \times D}$ is a collection of orthonormal vectors $\{\vect{v}_k\}$ together with corresponding positive scalars $\{\lambda_k\}$ such that $\tens{X} = \sum_{k=1}^K \lambda_k \vect{v}_k \otimes \vect{v}_k \otimes \vect{v}_k$. A unit vector $\vect{u} \in \mathbb{R}^D$ is an \textbf{eigenvector} of $\tens{X}$ with corresponding \textbf{eigenvalue} $\lambda$ if $\tens{X}(\matr{I},\vect{u},\vect{u}) = \lambda \vect{u}$, where $\matr{I}$ is the $D \times D$ identity matrix~\cite{anandkumar2012}. To see this, one can observe
\begin{align*}
\tens{X}(\matr{I},\vect{u},\vect{u})
					&= \sum_{k=1}^K \lambda_k \left(\matr{I}^\top \vect{v}_k\right) \otimes 									\left(\vect{u}^\top \vect{v}_k\right) \otimes \left(\vect{u}^\top \vect{v}_k\right) \\
                    &= \sum_{k=1}^K \lambda_k \left(\vect{u}^\top \vect{v}_k\right)^2 \vect{v}_k.
\end{align*}
By the orthogonality of the $\vect{v}_k$, it is clear that $\tens{X}(\matr{I},\vect{v}_k,\vect{v}_k) =  \lambda_k \vect{v}_k$ $\forall k$. Now, the orthogonal tensor decomposition proposed in~\cite{anandkumar2012} is based on the mapping
\begin{align} \label{eqn_tensor_power_method}
\vect{u} \mapsto \frac{\tens{X}(\matr{I},\vect{u},\vect{u})}{\|\tens{X}(\matr{I},\vect{u},\vect{u})\|_2},
\end{align}
which can be considered as the tensor equivalent of the well-known matrix power method. Obviously, all tensors are not orthogonally decomposable. As the tensor power method requires the eigenvectors $\{\vect{v}_k\}$ to be orthonormal, we need to perform \emph{whitening} - that is, projecting the tensor on a subspace such that the eigenvectors become orthogonal to each other. 

We note that the proposed algorithm applies to both of the STM and MOG problems. However, as the correlated noise scheme only works with Gaussian noise, the proposed $\capeagn$ employs the $\agn$ algorithm~\cite{imtiazOTD2018} at its core. 
In-line with our setup in Section \ref{sec:corr_noise}, we assume that there is a random noise generator that can generate and send noise to the sites through an encrypted/secure channel. The un-trusted aggregator can also generate noise and send those to the sites over encrypted/secure channels. At site $s$, the sample second-order moment matrix and the third-order moment tensor are denoted as $\matr{M}_2^s \in \mathbb{R}^{D\times D}$ and $\tens{M}_3^s \in \mathbb{R}^{D\times D\times D}$, respectively. The noise standard deviation required for computing the $(\epsilon_1, \delta_1)$ differentially-private approximate to $\matr{M}_2^s$ is given by
\begin{align}\label{eqn:tau2s}
\tau_2^s &= \frac{\Delta_2^s}{\epsilon_1} \sqrt{2\log\left(\frac{1.25}{\delta_1}\right)},
\end{align}
where the sensitivity $\Delta_2^s$ is inversely \hypertarget{orig:sensitivity_M2}{proportiona}l to the sample size $N_s$. To be more specific, we can write $\Delta_{2,S}^s = \frac{\sqrt{2}}{N_s}$ and $\Delta_{2,M}^s = \frac{1}{N_s}$. The detailed derivation of the sensitivity of $\matr{M}_2^s$ for both STM and MOG are \hyperlink{target:sensitivity_M2}{shown} in Appendix \ref{appendix:dpotd}. Additionally, at site $s$, the noise standard deviation required for computing the $(\epsilon_2, \delta_2)$ differentially-private approximate to $\tens{M}_3^s$ is given by
	\begin{align}\label{eqn:tau3s}
	\tau_3^s &= \frac{\Delta_3^s}{\epsilon_2} \sqrt{2\log\left(\frac{1.25}{\delta_2}\right)}.
	\end{align}
Again, we can write $\Delta_{3,S}^s = \frac{\sqrt{2}}{N_s}$ and $\Delta_{3,M}^s = \frac{2}{N_s} + \frac{6D\sigma^2}{N_s}$. Appendix \ref{appendix:dpotd} contains the  detailed algebra for calculating the sensitivity of $\tens{M}_3^s$ for STM and MOG. 
We note that, as in the case of $\matr{M}_2^s$, the sensitivity \hypertarget{orig:sensitivity_M3}{depends} only on the sample size $N_s$. 
Now, in the pooled-data scenario, the noise standard deviations would be given by:
	\begin{align*}
	\tau_2^\mrm{pool} &= \frac{\Delta_2^\mrm{pool}}{\epsilon_1} 
		\sqrt{2\log\left(\frac{1.25}{\delta_1}\right)}\\
	\tau_3^\mrm{pool} &= \frac{\Delta_3^\mrm{pool}}{\epsilon_2} 
		\sqrt{2\log\left(\frac{1.25}{\delta_2}\right)},
	\end{align*}
where $\Delta_2^\mrm{pool} = \frac{\Delta_2^s}{S}$ and $\Delta_3^\mrm{pool} = \frac{\Delta_3^s}{S}$, assuming equal number of samples in the sites.
We need to compute the $D\times K$ whitening matrix $\matr{W}$ and the $D\times D\times D$ tensor $\hat{\tens{M}}_3$ in a distributed way while satisfying differential privacy. Although we could employ our previous centralized differentially-private distributed PCA algorithm~\cite{imtiazDPCA2018} to compute $\matr{W}$, to achieve the same level of accuracy as the pooled data scenario, we compute the following matrix at site $s$:
	\begin{align*}
	\hat{\matr{M}}_2^s &= \matr{M}_2^s + \matr{E}_2^s + \matr{F}_2^s + \matr{G}_2^s,
	\end{align*}
where $\matr{E}_2^s \in \mathbb{R}^{D\times D}$ is \hypertarget{target:generate_E2}{generated} at the noise generator satisfying $\sum_{s=1}^S \matr{E}_2^s = 0$ and the entries $\left[\matr{E}_2^s\right]_{ij}$ drawn i.i.d. $\sim \mathcal{N}(0, \tau_{2e}^2)$. Here, we set the noise variance according to \eqref{eqn:cape_noise_variance}: $\tau_{2e}^2 = \left(1-\frac{1}{S}\right) {\tau_2^s}^2$. Additionally, $\matr{F}_2^s \in \mathbb{R}^{D\times D}$ is \hypertarget{target:generate_F2}{generated} at the aggregator with the entries $\left[\matr{F}_2^s\right]_{ij}$ drawn i.i.d. $\sim \mathcal{N}(0, \tau_{2f}^2)$. We set the noise variance according to \eqref{eqn:cape_noise_variance}: $\tau_{2f}^2 = \left(1-\frac{1}{S}\right) {\tau_2^s}^2$. Finally, $\matr{G}_2^s \in \mathbb{R}^{D\times D}$ is a symmetric matrix \hypertarget{target:generate_G2}{generated} at site $s$ where $\{\left[\matr{G}_2^s\right]_{ij}: i \in [D], j \leq i\}$ are drawn i.i.d. $\sim \mathcal{N}(0, \tau_{2g}^2)$, $[\matr{G}_2^s]_{ij} = [\matr{G}_2^s]_{ji}$ and $\tau_{2g}^2 = \frac{1}{S} {\tau_2^s}^2$. At the aggregator, we compute
	\begin{align*}
	\hat{\matr{M}}_2 &= \frac{1}{S}\sum_{s=1}^S \left(\hat{\matr{M}}_2^s - 	\matr{F}_2^s\right) = \frac{1}{S}\sum_{s=1}^S \left(\matr{M}_2^s + \matr{G}_2^s\right),
\end{align*}
where we used the relation $\sum_{s=1}^S \matr{E}_2^s = 0$. Note that the variance of the additive noise in $\hat{\matr{M}}_2$ is exactly the same as the pooled data scenario, as described in Section \ref{sec:corr_noise}. At the aggregator, we can then compute the SVD($K$) of $\hat{\matr{M}}_2$ as $\hat{\matr{M}}_2 = \matr{U} \matr{D} \matr{U}^\top$. We compute the matrix $\matr{W} = \matr{U}\matr{D}^{-\frac{1}{2}}$ and send it to the sites. 

Next, we focus on computing $\hat{\tens{M}}_3$ in the distributed setting. For this purpose, we can follow the same steps as computing $\hat{\matr{M}}_2$. However, $\hat{\tens{M}}_3$ is a $D\times D\times D$ tensor, and for large enough $D$, this will entail a very large communication overhead. We alleviate this in the following way: each site receives $\tens{F}_3^s \in \mathbb{R}^{D\times D\times D}$ and $\matr{W}$ from the aggregator and $\tens{E}_3^s \in \mathbb{R}^{D\times D\times D}$ from the noise generator. Here, $[\tens{F}_3^s]_{ijk}$ are drawn i.i.d. $\sim \mathcal{N}(0, \tau_{3f}^2)$. Additionally, $[\tens{E}_3^s]_{ijk}$ are drawn i.i.d. $\sim \mathcal{N}(0, \tau_{3e}^2)$ and $\sum_{s=1}^S \tens{E}_3^s = 0$ is satisfied. We set the two variance terms according to \eqref{eqn:cape_noise_variance}: $\tau_{3f}^2 = \tau_{3e}^s = \left(1-\frac{1}{S}\right) {\tau_3^s}^2$. Finally, each site generates their own $\tens{G}_3^s \in \mathbb{R}^{D\times D\times D}$ in the following way: site $s$ draws a vector $\vect{b} \in \mathbb{R}^{D_\mrm{sym}}$ with $D_\mrm{sym}={D+2 \choose 3}$ and entries i.i.d. $\sim \mathcal{N}(0, \tau_{3g}^2)$, where $\tau_{3g}^2 = \frac{1}{S}{\tau_3^s}^2$. The tensor $\tens{G}_3^s$ is generated with the entries from $\vect{b}$ such that $\tens{G}_3^s$ is symmetric. Again, for both $\hat{\matr{M}}_2^s$ and $\hat{\tens{M}}_3^s$, we are considering the equal sample size scenario for simplicity. Our framework requires only a small modification to incorporate the unequal privacy/sample size scenario (Section \ref{sec:weighted_dp_avg}). Now, each site $s$ computes
	\begin{align*}
	\hat{\tens{M}}_3^s 
	&= \tens{M}_3^s + \tens{E}_3^s + \tens{F}_3^s + \tens{G}_3^s \mbox{ and } \tilde{\tens{M}}_3^s = \hat{\tens{M}}_3^s\left(\matr{W}, \matr{W}, \matr{W}\right).
	\end{align*}
We note that $\tilde{\tens{M}}_3^s$ is a $K\times K\times K$ dimensional tensor. Each site sends this to the aggregator. This saves a lot of communication overhead as typically $K \ll D$. To see how this would result in the same estimate of $\tilde{\tens{M}}_3$ as the pooled data scenario, we \hypertarget{target:M3_tilde}{observe}
\begin{align*}
\tilde{\tens{M}}_3^s &=  \hat{\tens{M}}_3^s\left(\matr{W}, \matr{W}, \matr{W}\right) = \tens{M}_3^s\left(\matr{W}, \matr{W}, \matr{W}\right) + \\
& \tens{E}_3^s\left(\matr{W}, \matr{W}, \matr{W}\right) + \tens{F}_3^s\left(\matr{W}, \matr{W}, \matr{W}\right) + \tens{G}_3^s\left(\matr{W}, \matr{W}, \matr{W}\right).
\end{align*}
Additionally, at the aggregator, we compute 
\begin{align*}
\tilde{\tens{M}}_3 	&= \frac{1}{S}\sum_{s=1}^S \left(\tilde{\tens{M}}_3^s - \tilde{\tens{F}}_3^s\right) \\
							&= \left(\frac{1}{S}\sum_{s=1}^S \tens{M}_3^s + \tens{G}_3^s\right)\left(\matr{W}, \matr{W}, \matr{W}\right),
\end{align*}
where $\tilde{\tens{F}}_3^s = \tens{F}_3^s\left(\matr{W}, \matr{W}, \matr{W}\right)$. We used the associativity of the multi-linear operation~\cite{anandkumar2012} and the relation $\sum_{s=1}^S \tens{E}_3^s = 0$. The detailed calculation is \hyperlink{target:M3_tilde_calculation}{shown} in Appendix \ref{appendix:M3_tilde}. Note that the $\tilde{\tens{M}}_3$ we achieve in this scheme is exactly the same $\tilde{\tens{M}}_3$ we would have achieved if all the data samples were present in the aggregator. Moreover, this is also the quantity that the aggregator would get if the sites send the full $\hat{\tens{M}}_3^s$ to the aggregator instead of $\tilde{\tens{M}}_3^s$. The complete $\capeagn$ algorithm is shown in Algorithm \ref{alg:distagn}.

\begin{theorem}[Privacy of $\capeagn$ Algorithm]\label{thm:distagn}
Algorithm \ref{alg:distagn} computes an $(\epsilon_1+\epsilon_2,\delta_1+\delta_2)$ differentially private orthogonally decomposable tensor $\tilde{\tens{M}}_3$. Additionally, the computation of the projection subspace $\matr{W}$ is $(\epsilon_1,\delta_1)$ differentially private. 
\end{theorem}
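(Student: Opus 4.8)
The plan is to split the statement into its two assertions and verify them in the order in which Algorithm~\ref{alg:distagn} produces its outputs: first that the projection subspace $\matr{W}$ is $(\epsilon_1,\delta_1)$-differentially private, then that the joint output $(\matr{W},\tilde{\tens{M}}_3)$, and hence $\tilde{\tens{M}}_3$, is $(\epsilon_1+\epsilon_2,\delta_1+\delta_2)$-differentially private. The backbone mirrors the proof of Theorem~\ref{thm:dp_dpca}: certify per-site privacy of each noisy moment release via the Gaussian mechanism~\cite{dwork2006} (as used in the $\agn$ scheme~\cite{imtiazOTD2018}), lift it to a global guarantee using disjointness of the site datasets, use invariance of differential privacy under post-processing for the aggregation and whitening steps, and combine the two moment stages by the sequential composition theorem.

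For the whitening subspace, I would first check the CAPE variance arithmetic $\tau_{2e}^2+\tau_{2g}^2=\tau_{2f}^2+\tau_{2g}^2=(\tau_2^s)^2$ coming from~\eqref{eqn:cape_noise_variance} and~\eqref{eqn:tau2s}, so that the noise actually masking $\matr{M}_2^s$ in $\hat{\matr{M}}_2^s=\matr{M}_2^s+\matr{E}_2^s+\matr{F}_2^s+\matr{G}_2^s$ — either $\matr{E}_2^s+\matr{G}_2^s$, from the view of a coalition of colluding sites, or $\matr{F}_2^s+\matr{G}_2^s$, from the aggregator's view — is marginally i.i.d. Gaussian with per-entry variance at least $(\tau_2^s)^2$. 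Together with the $\mathcal{L}_2$ sensitivity $\Delta_2^s$ of $\matr{M}_2^s$ ($\Delta_{2,S}^s=\sqrt2/N_s$ for STM, $\Delta_{2,M}^s=1/N_s$ for MOG), this makes releasing $\hat{\matr{M}}_2^s$ $(\epsilon_1,\delta_1)$-differentially private with respect to $\matr{X}_s$; since the $\matr{X}_s$ are disjoint, a neighboring global dataset perturbs only one site, so $(\hat{\matr{M}}_2^s)_{s\in[S]}$ is $(\epsilon_1,\delta_1)$-DP. The aggregator's step $\hat{\matr{M}}_2=\frac1S\sum_s(\hat{\matr{M}}_2^s-\matr{F}_2^s)=\frac1S\sum_s(\matr{M}_2^s+\matr{G}_2^s)$ (using $\sum_s\matr{E}_2^s=0$ and that the aggregator knows the $\matr{F}_2^s$), the SVD, and the formation of $\matr{W}=\matr{U}\matr{D}^{-1/2}$ are all post-processing, so $\matr{W}$ is $(\epsilon_1,\delta_1)$-DP.

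I would then run the identical argument on the third moment: $\tau_{3e}^2+\tau_{3g}^2=\tau_{3f}^2+\tau_{3g}^2=(\tau_3^s)^2$ with $\tau_3^s$ from~\eqref{eqn:tau3s} and sensitivities $\Delta_{3,S}^s=\sqrt2/N_s$, $\Delta_{3,M}^s=(2+6D\sigma^2)/N_s$, so releasing $\hat{\tens{M}}_3^s=\tens{M}_3^s+\tens{E}_3^s+\tens{F}_3^s+\tens{G}_3^s$ is $(\epsilon_2,\delta_2)$-DP and, by disjointness, so is $(\hat{\tens{M}}_3^s)_s$. For a \emph{fixed} $\matr{W}$, forming $\tilde{\tens{M}}_3^s=\hat{\tens{M}}_3^s(\matr{W},\matr{W},\matr{W})$, subtracting the known $\tens{F}_3^s(\matr{W},\matr{W},\matr{W})$, cancelling $\sum_s\tens{E}_3^s=0$ via associativity of the multilinear map~\cite{anandkumar2012}, and averaging to get $\tilde{\tens{M}}_3=\bigl(\frac1S\sum_s\tens{M}_3^s+\tens{G}_3^s\bigr)(\matr{W},\matr{W},\matr{W})$ is post-processing, so the map data $\mapsto\tilde{\tens{M}}_3$ is $(\epsilon_2,\delta_2)$-DP given $\matr{W}$. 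Since each $\matr{X}_s$ feeds both $\matr{M}_2^s$ and $\tens{M}_3^s$, the matrix $\matr{W}$ is itself data-dependent, so the overall release is the adaptive composition of an $(\epsilon_1,\delta_1)$ mechanism (output $\matr{W}$) followed by an $(\epsilon_2,\delta_2)$ mechanism (output $\tilde{\tens{M}}_3$ given $\matr{W}$); sequential composition then yields $(\epsilon_1+\epsilon_2,\delta_1+\delta_2)$-DP for $(\matr{W},\tilde{\tens{M}}_3)$ and, by post-processing, for $\tilde{\tens{M}}_3$ alone. The ``orthogonally decomposable'' qualifier is a structural, not a privacy, statement: whitening the (noisy) second moment to the identity via $\matr{W}=\matr{U}\matr{D}^{-1/2}$ is exactly what renders the correspondingly whitened third moment orthogonally decomposable — exactly in the population limit and approximately under the added noise — making the tensor power map~\eqref{eqn_tensor_power_method} applicable, as in the $\agn$ framework~\cite{imtiazOTD2018}.

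I expect the one genuinely delicate point to be this final composition step: because $\matr{W}$ is built from the same records that generate $\tens{M}_3^s$, the tensor stage cannot be treated as pure post-processing of a standalone $(\epsilon_2,\delta_2)$ release, and the guarantee for $\tilde{\tens{M}}_3$ must be obtained from adaptive sequential composition of the two moment stages. A secondary item needing care, given the honest-but-curious model, is verifying that the cancellations $\sum_s\matr{E}_2^s=0$ and $\sum_s\tens{E}_3^s=0$ together with the aggregator's subtraction of $\matr{F}_2^s$ and $\tens{F}_3^s(\matr{W},\matr{W},\matr{W})$ are consistent with every party's view — so that no coalition of the aggregator with colluding sites ever observes a less-noisy function than the one certified above — which is where I would invoke the privacy analysis of the $\cape$ protocol~\cite{jafar2018}.
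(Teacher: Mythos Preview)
Your proposal is correct and follows essentially the same route as the paper's proof: verify the CAPE variance identities $\tau_{2e}^2+\tau_{2g}^2=\tau_{2f}^2+\tau_{2g}^2=(\tau_2^s)^2$ and $\tau_{3e}^2+\tau_{3g}^2=\tau_{3f}^2+\tau_{3g}^2=(\tau_3^s)^2$ to get per-site $(\epsilon_1,\delta_1)$- and $(\epsilon_2,\delta_2)$-DP releases of $\hat{\matr{M}}_2^s$ and $\hat{\tens{M}}_3^s$, lift to a global guarantee via disjointness, apply post-processing for the aggregation and whitening, and combine the two stages by composition. If anything, your write-up is more careful than the paper's in explicitly flagging that the composition must be \emph{adaptive} (since $\matr{W}$ depends on the same records feeding $\tens{M}_3^s$) and in separating the ``orthogonally decomposable'' descriptor from the privacy claim; the paper simply invokes ``the composition theorem'' at the step $\tilde{\tens{M}}_3^s=\hat{\tens{M}}_3^s(\matr{W},\matr{W},\matr{W})$ without dwelling on these points.
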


\begin{proof}[Proof] The proof of Theorem \ref{thm:distagn} follows from using the Gaussian mechanism~\cite{dwork2006}, the sensitivities of $\matr{M}_2^s$ and $\tens{M}_3^s$ and recalling that the data samples in each site are disjoint. First, we show that the computation of $\matr{W}$ satisfies $(\epsilon_1,\delta_1)$ differential privacy. Due to the nature of the correlated noise design, we have
\begin{align*}
\tau^2_{2e} + \tau^2_{2g} &= \tau^2_{2g} + \tau^2_{2f} = {\tau_2^s}^2 = \left(\frac{\Delta_2^s}{\epsilon_1} \sqrt{2\log\left(\frac{1.25}{\delta_1}\right)}\right)^2,
\end{align*}
where $\Delta_2^s$ is the sensitivity of $\matr{M}_2^s$. Therefore, the release of $\hat{\matr{M}}_2^s$ from each site $s$ is at least $(\epsilon_1, \delta_1)$ differentially-private. As differential privacy is closed under post-processing and the samples in each site are disjoint, the computation of $\matr{W}$ at the aggregator also satisfies $(\epsilon_1, \delta_1)$ differential privacy. Next, we show that the computation of $\tilde{\tens{M}}_3$ satisfies $(\epsilon_1+\epsilon_2,\delta_1+\delta_2)$ differential privacy. We recall that
\begin{align*}
\tau^2_{3e} + \tau^2_{3g} &= \tau^2_{3g} + \tau^2_{3f} = {\tau_3^s}^2 = \left(\frac{\Delta_3^s}{\epsilon_2} \sqrt{2\log\left(\frac{1.25}{\delta_2}\right)}\right)^2,
\end{align*}
where $\Delta_3^s$ is the sensitivity of $\tens{M}_3^s$. The computation of $\hat{\tens{M}}_3^s$ at each site is at least $(\epsilon_2, \delta_2)$ differentially-private. Further, by the composition theorem~\cite{dwork2006}, the computation $\tilde{\tens{M}}_3^s = \hat{\tens{M}}_3^s\left(\matr{W}, \matr{W}, \matr{W}\right)$ at each site is $(\epsilon_1+\epsilon_2, \delta_1+\delta_2)$ differentially-private. By the post-processing invariability, the computation of $\tilde{\tens{M}}_3$ at the aggregator is $(\epsilon_1+\epsilon_2, \delta_1+\delta_2)$ differentially-private.
\end{proof}

\noindent\textbf{Performance Gain with Correlated Noise. }As we mentioned before, this is the first work that proposes an algorithm for distributed differentially-private OTD. As we employ the $\cape$ scheme for our computations, the gain in the performance over a conventional distributed differentially-private OTD is therefore the same as in the case of distributed differentially-private averaging, as described in Proposition \ref{prop:gain}.

\noindent\textbf{Theoretical Performance Guarantee. }Although our proposed $\capeagn$ algorithm can reach the performance of the pooled data scenario (that is, the $\agn$ algorithm with all data samples from all sites stored in the aggregator), it is hard to characterize how the estimated $\{\hat{\vect{a}}_k\}$ and $\{\hat{w}_k\}$ would deviate from the true $\{\vect{a}_k\}$ and $\{w_k\}$, respectively. We note that although we are adding symmetric noise to the third-order moment tensor, an orthogonal decomposition need not exist for the perturbed tensor, even though the perturbed tensor is symmetric~\cite{anandkumar2012,kolda2015}. Anandkumar et al.~\cite{anandkumar2012} provided a bound on the error of the recovered decomposition in terms of the operator norm of the tensor perturbation. For our proposed algorithm, the perturbation includes the effect of estimating the third-order moment tensor from the samples as well as the noise added for differential-privacy. Even without accounting for the error in estimating the moments from observable samples, the operator norm of the effective noise at the aggregator: $\|\tens{G}\|_\mrm{op} = \frac{1}{S}\left\|\sum_{s=1}^S\tens{G}_3^s\right\|_\mrm{op}$, is a random quantity, and requires new measure concentration results to analyze. Relating these bounds to the error in estimating recovering the $\{\vect{a}_k\}$ and $\{w_k\}$ is nontrivial and we defer this for future work.

\noindent\textbf{Communication Cost.} We note that $\capeagn$ is a two-step algorithm: it 
computes the projection subspace $\matr{W} \in \mathbb{R}^{D \times K}$ and then orthogonally decomposable tensor $\tilde{\tens{M}}_3$. The random noise generator sends $\matr{E}_2^s \in \mathbb{R}^{D \times D}$ and $\tens{E}_3^s \in \mathbb{R}^{D \times D \times D}$ to each site $s$. Each site $s$ sends $\hat{\matr{M}}_2^s \in \mathbb{R}^{D\times D}$ and $\tilde{\tens{M}}_3^s \in \mathbb{R}^{K \times K \times K}$ and  to the aggregator. The aggregator sends $\matr{F}_2^s \in \mathbb{R}^{D \times D}$, $\matr{W} \in \mathbb{R}^{D \times K}$,  and $\tens{F}_3^s \in \mathbb{R}^{D \times D \times D}$ to each site $s$.
Therefore, the total communication cost is proportional to $3SD^2 + 2SD^3 + SDK + SK^3$ or $O(D^3)$. This is expected as we are computing the global $D\times D \times D$ third-order moment tensor in a distributed setting.

\section{Experimental Results}\label{sec:experimental_results}

In this section, we empirically show the effectiveness of the proposed distributed differentially-private matrix and tensor factorization algorithms.
We focus on investigating the privacy-utility trade-off:
how the performance varies as a function of the privacy parameters and the number of samples. We start with the 
the proposed $\capepca$ algorithm
followed by the
$\capeagn$ algorithm. In each case, we compare the proposed algorithms with existing (if any) algorithm, non-private algorithm and a conventional approach (no correlated noise).

\subsection{Improved Distributed Differentially-private PCA}
\begin{figure*}[t]
  \centering
  \includegraphics[width=1\textwidth]{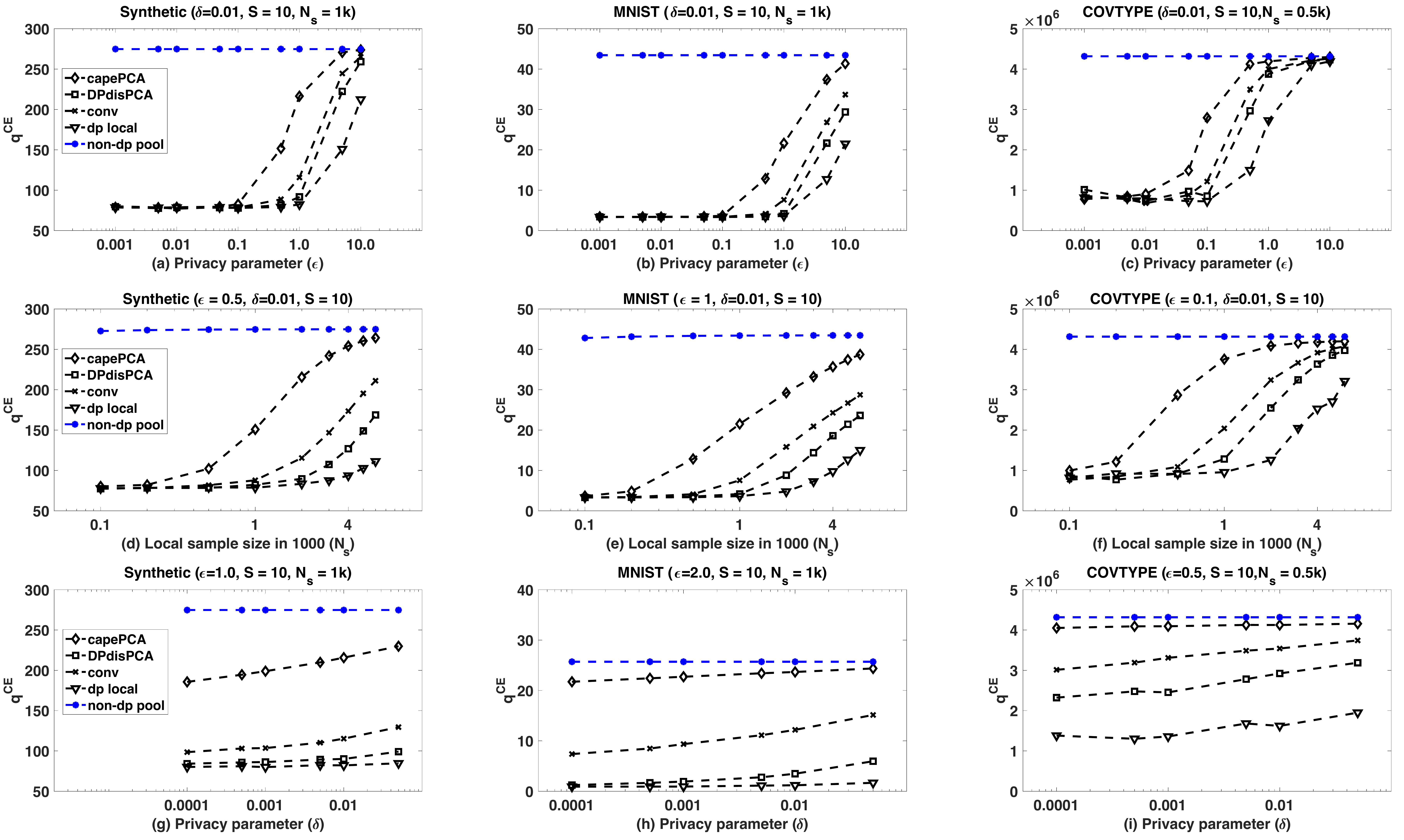}\\
  \vspace{-0.0in}
  \caption{Variation of performance in distributed PCA for synthetic and real data: (a) - (c) with privacy parameter $\epsilon$; (d) - (f) with sample size $N_s$ and (g) - (i) with privacy parameter $\delta$}
  \label{fig:all_figures_dpca}
\end{figure*}
%
We empirically compared the proposed $\capepca$, the existing $\mathsf{DPdisPCA}$ and non-private PCA on pooled data $(\nonpriv)$. We also included the performance of differentially private PCA~\cite{dwork2014} on local data ($\local$) (i.e. data of a single site) and the conventional approach ($\conv$) (i.e. without correlated noise). We designed the experiments according to Imtiaz and Sarwate~\cite{imtiazDPCA2018}
using three datasets: a \textit{synthetic} dataset ($D = 200$, $K = 50$) generated with zero mean and a pre-determined covariance matrix, the \textit{MNIST} dataset ($D = 784$, $K = 50$)~\cite{mnist} (MNIST) and the \textit{Covertype} dataset ($D = 54$, $K = 10$)~\cite{Lichman:2013} (COVTYPE). The MNIST consists of handwritten digits and has a training set of $60000$ samples, each of size $28\times 28$ pixels. The COVTYPE contains the forest cover types for $30 \times 30\ m^2$ cells obtained from US Forest Service (USFS) Region 2 Resource Information System (RIS) data. We collected the dataset from the UC Irvine KDD archive~\cite{Lichman:2013}. For our experiments, we randomly selected $60000$ samples from the COVTYPE. We preprocessed the data by subtracting the mean (centering) and normalizing the samples with the maximum $\mathcal{L}_2$ norm in each dataset to enforce the condition $\|\vect{x}_n\|_2 \le 1 \ \ \forall n$. We note that this preprocessing step is not differentially private, although it can be modified to satisfy differential-privacy at the cost of some utility. In all cases we show the average performance over 10 runs of the algorithms. As a performance measure of the produced subspace from the algorithm, we choose the captured energy: $q^\mrm{CE} = \tr(\hat{\matr{V}}^{\top} \matr{A} \hat{\matr{V}})$, where $\hat{\matr{V}}$ is the subspace estimated by an algorithm and $\matr{A}$ is the true second-moment matrix of the data. Note that, we can approximate the the captured energy in the true subspace as $\tr(\matr{V}_K(\matr{A})^\top \matr{A} \matr{V}_K(\matr{A}))$, where $\matr{A}$ is achieved from the pooled-data sample second-moment matrix and $\matr{V}_K(\matr{A})$ is achieved from the non-private PCA.

\noindent\textbf{Dependence on privacy parameter $\epsilon$.} First, we explore the trade-off between privacy and utility; i.e., between $\epsilon$ and $q^\mrm{CE}$. We note that the standard deviation of the added noise is inversely proportional to $\epsilon$ -- bigger $\epsilon$ means higher privacy risk but less noise and thus, better utility. 
In Figure \ref{fig:all_figures_dpca}(a)-(c), we show the variation of $q^\mrm{CE}$ of different algorithms for different values of $\epsilon$. For this experiment, we kept the parameters $\delta$, $N_s$ and $S$ fixed. For all the datasets, we observe that as $\epsilon$ increases (higher privacy risk), the captured energy increases. The proposed $\capepca$ reaches the optimal utility for some parameter choices
and clearly outperforms the existing $\mathsf{DPdisPCA}$, the $\conv$, and the $\local$ algorithms. One of the reasons that $\capepca$ outperforms $\conv$ is the smaller noise variance at the aggregator, as described \hyperlink{target:dpca_perf_gain}{before}. Moreover, $\capepca$ outperforms $\mathsf{DPdisPCA}$ because $\mathsf{DPdisPCA}$ suffers from a larger variance at the aggregator due to computation of the partial square root of $\hat{\matr{A}}_s$~\cite{imtiazDPCA2018}. However, it should be noted here that $\mathsf{DPdisPCA}$ offers a much smaller communication overhead than $\capepca$. Achieving better performance than $\local$ is intuitive because including the information from multiple sites to estimate a population parameter should always result in better performance than using the data from a single site only. An interesting observation is that for datasets with lower dimensional samples, we can use smaller $\epsilon$ (i.e., to guarantee lower privacy risk) for the same utility.

\noindent\textbf{Dependence on number of samples $N_s$.} Next, we investigate the variation in performance with sample size $N_s$. Intuitively, it should be easier to guarantee smaller privacy risk $\epsilon$ and higher utility $q^\mrm{CE}$, when the number of samples is large. Figures \ref{fig:all_figures_dpca}(d)-(f) show how $q^\mrm{CE}$ increases as a function of sample size per site $N_s$. The variation with $N_s$ reinforces the results seen earlier with variation of $\epsilon$. For a fixed $\epsilon$ and $\delta$, the utility increases as we increase $N_s$. For sufficiently large local sample size, the captured energy will reach that of $\nonpriv$. Again, we observe a sharper increase in utility for lower-dimensional dataset.

\noindent\textbf{Dependence on privacy parameter $\delta$.} Finally, we explore the variation of performance with the other privacy parameter $\delta$. Recall that $\delta$ can be considered as the probability that the algorithm releases the private information without guaranteeing privacy. We, therefore, want this to be as small as possible. However, lower $\delta$ results in larger noise variance. In Figure \ref{fig:all_figures_dpca}(g)-(i), we show how $q^\mrm{CE}$ vary with varying $\delta$. We observe that if $\delta$ is not too small, the proposed algorithm can achieve very good utility, easily outperforming the other algorithms.

\subsection{Distributed Differentially-private OTD}
\begin{figure*}[t]
  \centering
  \includegraphics[width=1\textwidth]{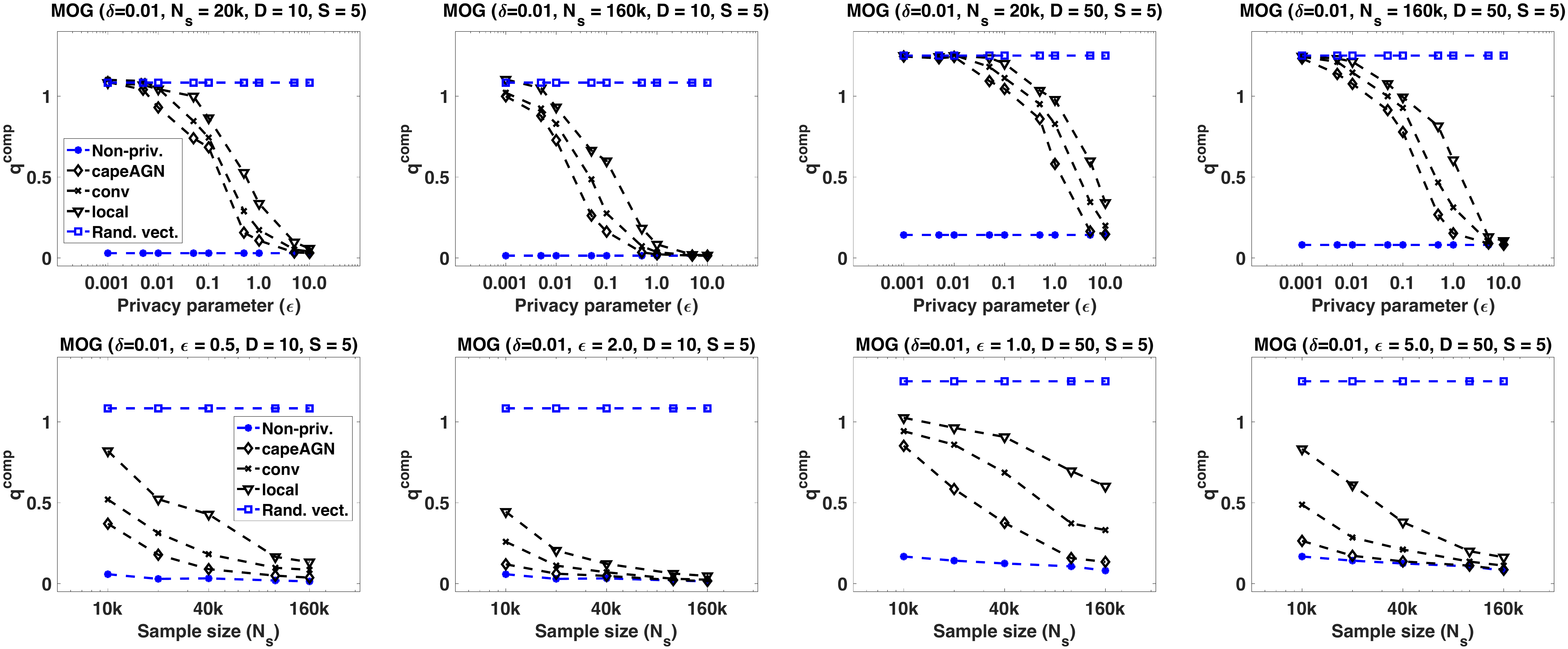}\\
  \vspace{-0.0in}
  \caption{Variation of performance in the MOG setup: top-row -- with privacy parameter $\epsilon$; bottom-row -- with sample size $N_s$}
  \label{fig:MOG_vs_all}
\end{figure*}

\begin{figure*}[t]
  \centering
  \includegraphics[width=1\textwidth]{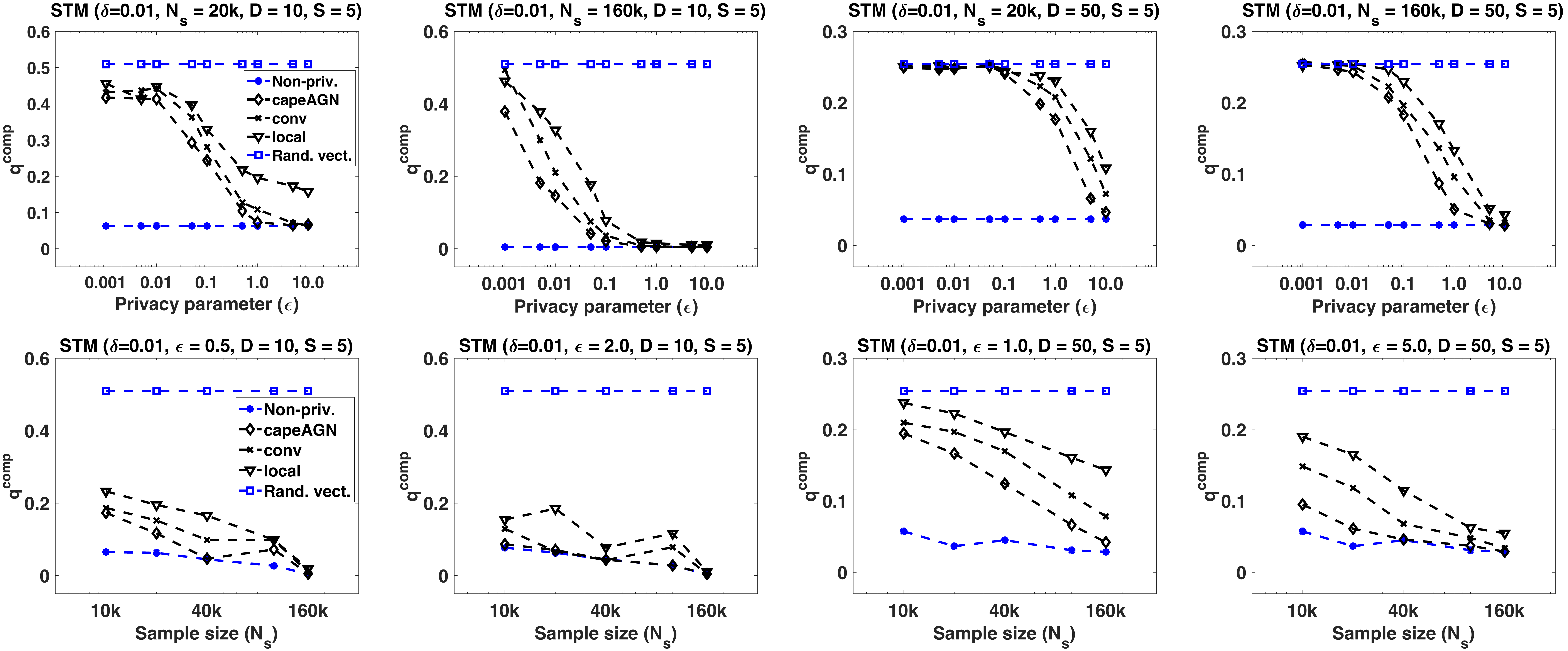}\\
  \vspace{-0.0in}
  \caption{Variation of performance in the STM setup: top-row -- with privacy parameter $\epsilon$; bottom-row -- with sample size $N_s$}
  \label{fig:STM_vs_all}
\end{figure*}

\begin{figure*}[t]
  \centering
  \includegraphics[width=1\textwidth]{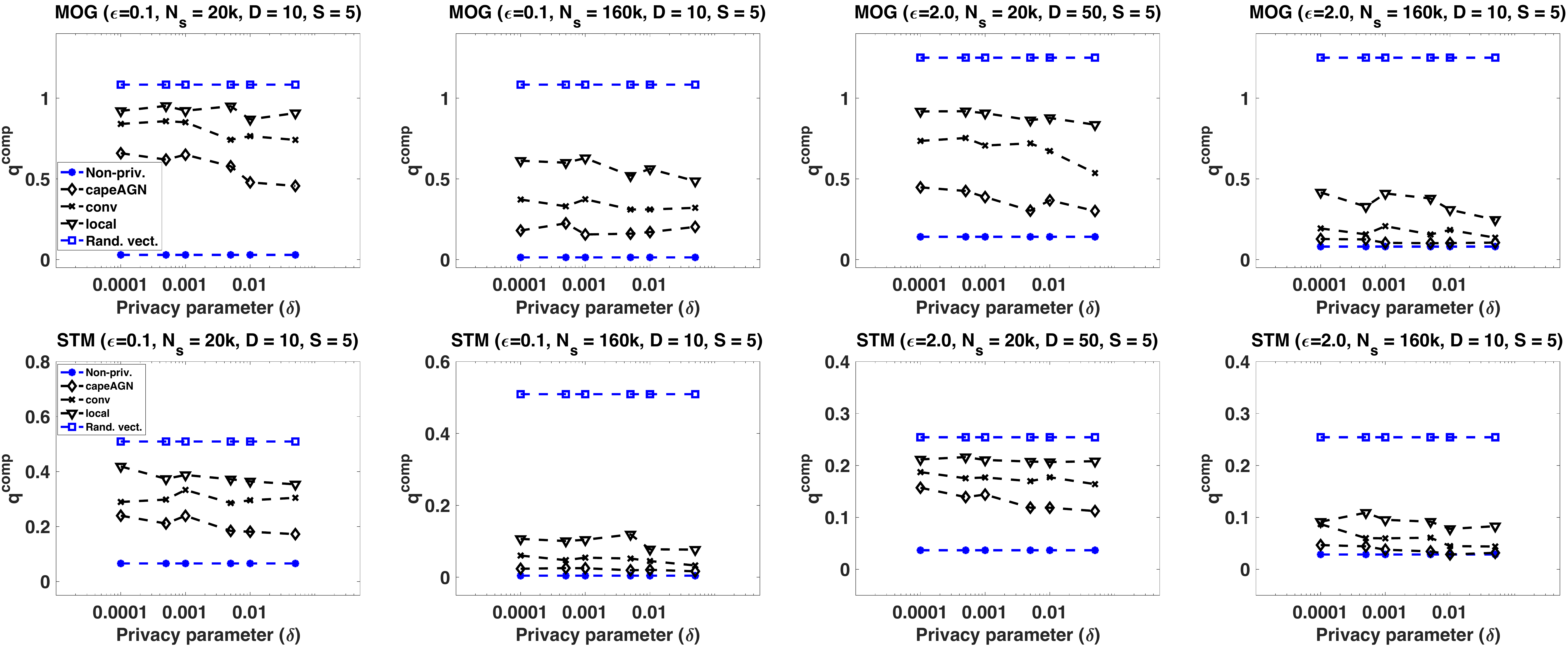}\\
  \vspace{-0.0in}
  \caption{Variation of performance with privacy parameter $\delta$: top-row -- in MOG setup; bottom-row -- in STM setup}
  \label{fig:delta_vs_all}
\end{figure*}

For the proposed $\capeagn$ algorithm, we focus on measuring how well the output of the proposed algorithm approximate the true components $\{\vect{a}_k\}$ and $\{w_k\}$. Let the recovered component vectors be $\{\hat{\vect{a}}_k\}$. We use an error metric~\cite{imtiazOTD2018} $q^\mrm{comp}$, to capture the disparity between $\{\vect{a}_k\}$ and  $\{\hat{\vect{a}}_k\}$. 
For the $k$-th recovered component vector $\hat{\vect{a}}_k$, we compute the Euclidean distance from it to all of the true component vectors $\{\vect{a}_k\}$ and find the one with the minimum Euclidean distance. This distance is the score for the $k$-th recovered component. We take the average of all scores to get $q^\mrm{comp}$: 
\begin{align*}
q^\mrm{comp} &= \frac{1}{K}\sum_{k=1}^K ED^k_{\mrm{min}}, \mbox{ where } ED^k_\mrm{min} = \min_{k' \in [K]} \| \hat{\vect{a}}_k - \vect{a}_{k'} \|_2.
\end{align*}
A similar measure is used in dictionary learning to verify the correctness of the recovered dictionary atoms~\cite{nguyen2012}. In order for the comparison, we show the error resulting from the $\hat{\vect{a}}_k$'s achieved from the proposed $\capeagn$ algorithm, a conventional (but never proposed anywhere to the best of our knowledge) distributed differentially-private OTD algorithm that does not employ correlated noise $(\conv)$, a differentially-private OTD~\cite{imtiazOTD2018} on local data $(\local)$ and the non-private tensor power method~\cite{anandkumar2012} on pooled data $(\nonprivT)$. We also show the error considering random vectors as $\hat{\vect{a}}_k$'s $(\rand)$. The reason~\cite{imtiazOTD2018} behind showing errors pertaining to random vectors is the following: this error corresponds to the worst possible results, as we are not taking any information from data into account to estimate $\hat{\vect{a}}_k$'s. We note that, as recovering the component vectors is closely related with recovering the selection probabilities $\{w_k\}$, we only show the error associated with recovering the component vectors. We studied the dependence of $q^\mrm{comp}$ on the privacy parameters $\epsilon$, $\delta$ and the sample size $N_s$. In all cases we show the average performance over 10 runs of each algorithm. We note that the $\capeagn$ algorithm adds noise in two stages for ensuring differential-privacy: one for estimating $\matr{W}$ and another for estimating $\tens{M}_3$. We equally divided $\epsilon$ and $\delta$ to set $\epsilon_1$, $\epsilon_2$ and $\delta_1$, $\delta_2$ for the two stages. Optimal allocation of $\epsilon$ and $\delta$ in multi-stage differentially-private algorithms is still an open question.

\noindent\textbf{Performance Variation in the MOG Setup.} First, we present the performance of the aforementioned algorithms in the setting of the mixture of Gaussians. We use two \textit{synthetic data} sets of different feature dimensions ($D=10$, $K=5$ and $D=50$, $K=10$), where the common covariance is $\sigma^2 = 0.05$ and the component vectors $\{\vect{a}_k\}$ have $\mathcal{L}_2$-norm at-most 1. 

We first explore the \textit{privacy-utility tradeoff} between $\epsilon$ and $q^\mrm{comp}$. For the $\capeagn$ algorithm, the variance of the noise is inversely proportional to $\epsilon^2$ -- smaller $\epsilon$ means more noise (lower utility) and lower privacy risk. In the top-row of Figure \ref{fig:MOG_vs_all}, we show the variation of $q^\mrm{comp}$ with $\epsilon$ for a fixed $\delta = 0.01$ and $S = 5$ for two different feature dimensions, each with two different samples sizes. For both of the feature dimensions, we observe that as $\epsilon$ increases (higher privacy risk), the errors decrease and the proposed $\capeagn$ algorithm outperforms the $\conv$ and $\local$ methods. $\capeagn$ matches the performance of $\nonprivT$ method for larger $\epsilon$ values. For a particular feature dimension, we notice that if we increase $N_s$, the performance of $\capeagn$ gets even better. This is expected as the variance of the noise for $\capeagn$ is inversely proportional to square of the sample size. 

Next, we consider the performance variation with $N_s$. Intuitively, it should be easier to guarantee a smaller privacy risk for the same $\epsilon$ and a higher utility (lower error) when the number of samples is large. In the bottom row of Figure \ref{fig:MOG_vs_all}, we show how the errors vary as a function of $N_s$ for the MOG model for two different feature dimensions, while keeping $\delta = 0.01$ and $S = 5$ fixed. The variation with the sample size reinforces the results seen earlier with variation in $\epsilon$: the proposed $\capeagn$ outperforms the other algorithms under investigation for both $D=10$ and $D=50$. In general, $\capeagn$ approaches the performance of $\nonprivT$ as the sample size increases. When $\epsilon$ is large enough, the $\capeagn$ algorithm achieves as much utility as $\nonprivT$ method.

Finally, we show the variation of performance with the other privacy parameter $\delta$. Recall that $\delta$ can be interpreted as the probability that the privacy-preserving algorithm releases the private information ``out in the wild'' without any additive noise. Therefore, we want to ensure that $\delta$ is small. However, the smaller the $\delta$ is the larger the noise variance becomes. Thus smaller $\delta$ dictates loss in utility. We observe this in our experiments as well. In the top-row of Figure \ref{fig:delta_vs_all}, we show the variation of $q^\mrm{comp}$ with $\delta$ for two different feature dimensions and two different sample sizes. We kept $S=5$ fixed.  We observe that when $\epsilon$ is small, we need larger $\delta$ to achieve meaningful performance. This can be explained in the following way: for Gaussian mechanism, we need to ensure that the standard deviation of the noise $\sigma$ satisfies $\sigma \geq \frac{\Delta}{\epsilon}\sqrt{2\log\frac{1.25}{\delta}}$, where $\Delta$ is the $\mathcal{L}_2$ sensitivity of the function under consideration. This inequality can be satisfied with infinitely many $(\epsilon, \delta)$ pairs and one can keep the noise level the same for a smaller $\epsilon$ with a larger $\delta$. We observe from the figure that when both $\epsilon$ and $N_s$ are larger, the proposed $\capeagn$ can achieve very close performance to the non-private one even for very small $\delta$ values.

\noindent\textbf{Performance Variation in the STM Setup. }We performed experiments on two \textit{synthetic} datasets of different feature dimensions ($D=10$, $K=5$ and $D=50$, $K=10$) generated with pre-determined $\vect{w}$ and $\{\vect{a}_k\}$. It should be noted here that the recovery of $\{\vect{a}_k\}$ is difficult~\cite{imtiazOTD2018}, because the recovered word probabilities from the tensor decomposition, whether private or non-private, may not always be valid probability vectors (i.e., no negative entries and sum to 1). Therefore, prior to computing the $q^\mrm{comp}$, we ran a post-processing step (0-out negative entries and then normalize by summation) to ensure that the recovered vectors are valid probability vectors. This process is non-linear and potentially makes the recovery error worse. However, for practical STM, $D$ is not likely to be 10 or 50, rather it may be of the order of thousands, simulating which is a huge computational burden. In general, if we want the same privacy level for higher dimensional data, we need to increase the sample size. We refer the reader to some efficient (but non-privacy-preserving) implementations~\cite{huang2014} for tensor factorization.

As in the case of the MOG model, we first explore the \textit{privacy-utility tradeoff} between $\epsilon$ and $q^\mrm{comp}$. In the top-row of Figure \ref{fig:STM_vs_all}, we show the variation of  $q^\mrm{comp}$ with $\epsilon$ for a fixed $\delta = 0.01$ and $S = 5$ for two different feature dimensions. For both of the feature dimensions, we observe that as $\epsilon$ increases (higher privacy risk), the errors decrease. The proposed $\capeagn$ outperforms $\conv$ and $\local$ methods in all settings; and match the performance of $\nonprivT$ for large enough $\epsilon$. Increasing $N_s$ makes the proposed algorithm perform even better.

Next, in the bottom-row of Figure \ref{fig:STM_vs_all}, we show how the errors vary as a function of $N_s$ for two different feature dimensions, while keeping $\delta = 0.01$ and $S = 5$ fixed. The variation with $N_s$ reiterates the results seen earlier. The proposed $\capeagn$ outperforms all other algorithms (except the $\nonprivT$) for both $D = 10$ and $D=50$. For larger $N_s$, it achieves almost the same utility as the $\nonprivT$ algorithm. Even for smaller $\epsilon$ with a proper sample size, the error is very low. For the $D=10$ case, the $\capeagn$ always performs very closely with the $\nonprivT$ algorithm.

Lastly, we show the variation of $q^\mrm{comp}$ with $\delta$ in the bottom-row of Figure \ref{fig:delta_vs_all}. We observe similar performance trend as in the MOG setting. For smaller $\epsilon$ and sample size, we need to compensate with larger $\delta$ to achieve a performance closer to $\nonprivT$ one. However, when sample size is larger, we can get away with a smaller $\epsilon$ and $\delta$. This is intuitive as hiding one individual among a large group is easier -- the additive noise variance need not be very large and hence the performance does not take a large hit.

\section{Conclusion}\label{sec:conclusion}
In this paper, we proposed new algorithms for distributed differentially-private principal component analysis and orthogonal tensor decomposition. Our proposed algorithms achieve the same level of additive noise variance as the pooled data scenario for ensuring differential-privacy. Therefore, we attain the same utility as the differentially-private pooled data scenario in a distributed setting. This is achieved through the employment of the correlated noise design protocol, under the assumption of availability of some reasonable resources. We empirically compared the performance of the proposed algorithms with those of existing (if any) and conventional distributed algorithms on synthetic and real data sets. We varied privacy parameters and relevant dataset parameters. The proposed algorithms outperformed the existing and conventional algorithms comfortably and matched the performance of corresponding non-private algorithms for proper parameter choices. In general, the proposed algorithms offered very good utility even for strong privacy guarantees, which indicates that meaningful privacy can be attained even without loosing much utility. 

\section*{Appendix}
\appendix

\section{Algebra Related with $\cape$ Protocol}\label{appendix:cape}

\subsection{Proof of Lemma \ref{lemma:cape}}\label{appendix:cape:lemma}
\begin{algorithm}[h] 
	\caption{Correlation Assisted Private Estimation ($\cape$)\label{alg:dp_avg}}
	\begin{algorithmic}[1]
    \Require Data samples $\{\vect{x}_s\}$; privacy parameters $\epsilon$, $\delta$.
    \State Compute $\tau_s \gets \frac{1}{N_s\epsilon}\sqrt{2\log\frac{1.25}{\delta}}$
    \State At the random noise generator, generate $e_s \sim \mathcal{N}(0, \tau^2_e)$, where $\tau^2_e = (1-\frac{1}{S})\tau^2_s$ and $\sum_{s=1}^S e_s = 0$
    \State At the central aggregator, generate $f_s \sim \mathcal{N}(0, \tau^2_f)$, where $\tau^2_f = (1-\frac{1}{S})\tau^2_s$
    \For{$s = 1,\ \ldots,\ S$}
    	\State Get $e_s$ from the random noise generator
        \State Get $f_s$ from the central aggregator
        \State Generate $g_s \sim \mathcal{N}(0, \tau_g^2)$, where $\tau_g^2 = \frac{\tau_s^2}{S}$
        \State Compute and send $\hat{a}_s \gets f(\vect{x}_s) + e_s + f_s + g_s$ \label{alg:dp_avg:step_as_hat}
    \EndFor
    \State At the central aggregator, compute $a_\mrm{ag}^\mrm{imp} \gets \frac{1}{S} \sum_{s=1}^S \hat{a}_s - \frac{1}{S} \sum_{s=1}^S f_s$ \label{alg:dp_avg:step_a_ag}\\
    \Return $a_\mrm{ag}^\mrm{imp}$
    \end{algorithmic}
\end{algorithm}

\begin{lemma}\label{lemma:cape}
Let the variances of the noise terms $e_s$, $f_s$ and $g_s$ (Step \ref{alg:dp_avg:step_as_hat} of Algorithm \ref{alg:dp_avg}) be given by \eqref{eqn:cape_noise_variance}. If we denote the variance of the additive noise (for preserving privacy) in the pooled data scenario by $\tau_c^2$ and the variance of the estimator $a_\mrm{ag}^\mrm{imp}$ (Step \ref{alg:dp_avg:step_a_ag} of Algorithm \ref{alg:dp_avg}) by ${\tau_\mrm{ag}^\mrm{imp}}^2$ then Algorithm \ref{alg:dp_avg} achieves $\tau_c^2 = {\tau_\mrm{ag}^\mrm{imp}}^2$.
\end{lemma}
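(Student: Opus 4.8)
The plan is to compute the aggregator's estimator $a_\mrm{ag}^\mrm{imp}$ in closed form, isolate its stochastic (noise) component, and evaluate its variance directly. First I would substitute the site outputs $\hat{a}_s = f(\vect{x}_s) + e_s + f_s + g_s$ (Step \ref{alg:dp_avg:step_as_hat}) into the aggregator computation $a_\mrm{ag}^\mrm{imp} = \frac{1}{S}\sum_{s=1}^S \hat{a}_s - \frac{1}{S}\sum_{s=1}^S f_s$ (Step \ref{alg:dp_avg:step_a_ag}). The $f_s$ terms cancel identically because the aggregator generated and hence knows them exactly, and the constraint $\sum_{s=1}^S e_s = 0$ imposed on the random noise generator eliminates the $e_s$ terms. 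This leaves $a_\mrm{ag}^\mrm{imp} = \frac{1}{S}\sum_{s=1}^S f(\vect{x}_s) + \frac{1}{S}\sum_{s=1}^S g_s$; under the equal-sample-size assumption $N_s = N/S$ the first sum equals the exact pooled mean $\frac{1}{N}\sum_{n=1}^N x_n$, so the only random perturbation of the estimate is $\frac{1}{S}\sum_{s=1}^S g_s$.

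Next I would evaluate ${\tau_\mrm{ag}^\mrm{imp}}^2 = \text{var}\!\left[\frac{1}{S}\sum_{s=1}^S g_s\right]$. Since the $g_s$ are generated independently at the sites with $g_s \sim \mathcal{N}(0,\tau_g^2)$ and $\tau_g^2 = \tau_s^2/S$ by \eqref{eqn:cape_noise_variance}, independence gives ${\tau_\mrm{ag}^\mrm{imp}}^2 = \frac{1}{S^2}\sum_{s=1}^S \tau_g^2 = \frac{1}{S^2}\cdot S \cdot \frac{\tau_s^2}{S} = \frac{\tau_s^2}{S^2}$. Finally I would match this against the pooled-data noise level: recalling $\tau_s = \frac{1}{N_s\epsilon}\sqrt{2\log(1.25/\delta)}$ and $\tau_c = \frac{1}{N\epsilon}\sqrt{2\log(1.25/\delta)}$ with $N = S N_s$, we get $\tau_c = \tau_s/S$, hence $\tau_c^2 = \tau_s^2/S^2 = {\tau_\mrm{ag}^\mrm{imp}}^2$, which is exactly the claim.

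I do not expect a genuine obstacle here — the statement is essentially a bookkeeping computation. The one point that warrants a moment of care is that the $e_s$ are \emph{not} independent (they are jointly constrained to sum to zero), so one must not reason about their aggregated variance as if they were i.i.d.; the argument instead relies on their exact cancellation in the sum, and likewise on the aggregator's exact knowledge of the $f_s$. I would also note explicitly that the privacy half of the setup — that each released $\hat{a}_s$ is $(\epsilon,\delta)$-differentially private — is not part of this lemma; it follows separately from the identities $\tau_e^2 + \tau_g^2 = \tau_f^2 + \tau_g^2 = \tau_s^2$ together with the Gaussian mechanism, as already used in the text.
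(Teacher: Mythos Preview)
Your proposal is correct and follows essentially the same route as the paper's proof: substitute $\hat{a}_s$ into the aggregator's estimate, use the exact cancellation of the $f_s$ and of $\sum_s e_s$, then compute the variance of $\frac{1}{S}\sum_s g_s$ and compare it to $\tau_c^2$ via $\tau_c = \tau_s/S$. Your explicit remark that the $e_s$ are not independent and are eliminated by exact cancellation (rather than by a variance argument) is a nice clarification the paper leaves implicit, and your observation that the privacy claim is separate from the lemma is accurate.
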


\begin{proof}
We recall that in the pooled data scenario, the sensitivity of the function $f(\vect{x})$ is $\frac{1}{N}$, where $\vect{x} = \left[\vect{x}_1,\ldots, \vect{x}_S\right]$. Therefore, to approximate $f(\vect{x})$ satisfying $(\epsilon, \delta)$ differential privacy, we need to have additive Gaussian noise standard deviation at least $\tau_c = \frac{1}{N\epsilon}\sqrt{2\log\frac{1.25}{\delta}}$. Next, consider the $(\epsilon, \delta)$ differentially-private release of the function $f(\vect{x}_s)$. The sensitivity of this function is $\frac{1}{N_s}$. Therefore, the $(\epsilon, \delta)$ differentially-private approximate of the function $f(\vect{x}_s)$ requires an additive Gaussian noise standard deviation at least $\tau_s = \frac{1}{N_s\epsilon}\sqrt{2\log\frac{1.25}{\delta}}$. Note that, if we assume equal number of samples in each site, then we have $\tau_c = \frac{\tau_s}{S} \implies \tau_c^2 = \frac{\tau_s^2}{S^2}$. We will now show that the $\cape$ algorithm will yield the same noise variance of the estimator at the aggregator. Recall that at the aggregator we compute $a_\mrm{ag}^\mrm{imp} = \frac{1}{S} \sum_{s=1}^S \left(\hat{a}_s - f_s\right) = \frac{1}{N} \sum_{n=1}^{N} x_n + \frac{1}{S} \sum_{s=1}^S g_s$. The variance of the estimator ${\tau_\mrm{ag}^\mrm{imp}}^2 \triangleq S\cdot \frac{\tau_g^2}{S^2} = \frac{\tau_g^2}{S} = \frac{\tau_s^2}{S^2}$, which is exactly the same as the pooled data scenario. Therefore, the $\cape$ algorithm allows us to achieve the same additive noise variance as the pooled data scenario, while satisfying at least $(\epsilon, \delta)$ differential privacy at the sites and $(\epsilon, \delta)$ differential privacy for the final output from the aggregator.
\end{proof}

\subsection{Solution of the Feasibility Problem of Section \ref{sec:weighted_dp_avg}}\label{appendix:unequal_privacy}
We formulated a feasibility problem to solve for the unknown noise variances $\{\tau_{es}^2, \tau_{gs}^2, \tau_{fs}^2\}$ as
\begin{align*}
	\underset{}{\text{minimize}} 	&\qquad 0 \\
				\text{subject to} 	&\qquad \tau_{fs}^2 + \tau_{gs}^2 \geq \tau_s^2,\  \tau_{es}^2 + \tau_{gs}^2 \geq \tau_s^2, \\
                                    			&\qquad \sum_{s=1}^S \mu_s^2 \tau_{gs}^2 = \tau_c^2,\ \sum_{s=1}^S \mu_s e_s = 0,
\end{align*}
for all $s\in [S]$, where $\{\mu_s\}$, $\tau_c$ and $\tau_s$ are known to the aggregator. As mentioned before, multiple solutions are possible. We present the details of one solution here that solves the problem with equality. \\
\noindent \textbf{Solution. }We start with $\sum_{s=1}^S \mu_s e_s = 0$. We can set
\begin{align*}
\sum_{s=1}^{S-1} \mu_s e_s + \mu_S e_S = 0 \implies \sum_{s=1}^{S-1} \mu_s^2 \tau_{es}^2 = \mu_S^2\tau_{eS}^2 \\
\implies \sum_{s=1}^{S-1} \mu_s^2 \tau_{es}^2 - \mu_S^2\tau_{eS}^2 = 0.
\end{align*}
Additionally, we have $\sum_{s=1}^{S-1} \mu_s^2 \tau_{gs}^2 + \mu_S^2\tau_{gS}^2 = \tau_c^2$. Combining these, we observe $\tau_{gS}^2 - \tau_{eS}^2 = \frac{1}{\mu_S^2} \left(\tau_c^2 - \sum_{s=1}^{S-1}\mu_s^2\tau_s^2\right)$. Moreover, for the $S$-th site, $\tau_{gS}^2 + \tau_{eS}^2 = \tau_S^2$. Therefore, we can solve for $\tau_{gS}^2$ and $\tau_{eS}^2$ as
\begin{align*}
\tau_{gS}^2 &= \frac{\tau_S^2}{2} + \frac{1}{2\mu_S^2}\left(\tau_c^2 - \sum_{s=1}^{S-1}\mu_s^2\tau_s^2\right) \\
\tau_{eS}^2 &= \frac{\tau_S^2}{2} - \frac{1}{2\mu_S^2}\left(\tau_c^2 - \sum_{s=1}^{S-1}\mu_s^2\tau_s^2\right).
\end{align*}
Additionally, we set $\tau_{fS}^2$ from $\tau_{gS}^2 + \tau_{fS}^2 = \tau_S^2$ as
\begin{align*}
\tau_{fS}^2 &= \frac{\tau_S^2}{2} - \frac{1}{2\mu_S^2}\left(\tau_c^2 - \sum_{s=1}^{S-1}\mu_s^2\tau_s^2\right).
\end{align*}
Now, we focus on setting the noise variances for $s \in [S-1]$. From the relation $\sum_{s=1}^{S-1} \mu_s^2 \tau_{es}^2 = \mu_S^2\tau_{eS}^2$, one solution is to set 
\begin{align*}
\tau_{es}^2 &= \frac{1}{\mu_s^2(S-1)}\left[\frac{\mu_S^2}{2}\tau_S^2 - \frac{1}{2}\left(\tau_c^2 - \sum_{s=1}^{S-1}\mu_s^2\tau_s^2\right)\right].
\end{align*}
Using this and $\tau_{gs}^2 = \tau_s^2 - \tau_{es}^2$, we have
\begin{align*}
\tau_{gs}^2 &=\tau_s^2 - \frac{1}{\mu_s^2(S-1)}\left[\frac{\mu_S^2}{2}\tau_S^2 - \frac{1}{2}\left(\tau_c^2 - \sum_{s=1}^{S-1}\mu_s^2\tau_s^2\right)\right].
\end{align*}
Finally, we solve for $\tau_{fs}^2 = \tau_s^2 - \tau_{gs}^2$ as
\begin{align*}
\tau_{fs}^2 &= \frac{1}{\mu_s^2(S-1)}\left[\frac{\mu_S^2}{2}\tau_S^2 - \frac{1}{2}\left(\tau_c^2 - \sum_{s=1}^{S-1}\mu_s^2\tau_s^2\right)\right].
\end{align*}
Therefore, we can solve the feasibility problem with equality.

\section{Notation and Definitions for Tensor Data}\label{appendix:tensor_preliminaries}

\textbf{Tensors} are multi-dimensional arrays, higher dimensional analogs of matrices. The very first tensor decomposition ideas (e.g. tensor rank and polyadic decomposition) are attributed to Hitchcock~\cite{hitchcock1927,hitchcock1928}. Tensor decomposition and multi-way signal models were used in the context of latent variable models in psychometrics~\cite{cattell1944}. It became popular in neuroscience, signal processing and machine learning later.

An $M$-way or $M$-th order tensor is an element of the tensor product of $M$ vector spaces. \textbf{Fibers} are higher order analogs of rows and columns. A fiber is defined by fixing every index but one. An $M$-way tensor $\tens{X} \in \mathbb{R}^{D_1 \times \ldots \times D_M}$ is rank-1 if it can be written as the outer product of $M$ vectors:
\begin{align*}
\tens{X} &= \vect{x}_1 \otimes \vect{x}_2 \otimes \ldots \otimes \vect{x}_M,	
\end{align*}
where $\vect{x}_m \in \mathbb{R}^{D_m}$ and $\otimes$ denotes the outer product. \textbf{Matricization} (or \emph{unfolding} or \emph{flattening}) is the process of reordering the elements of an $M$-way tensor into a matrix. The mode-$m$ matricization of $\tens{X} \in \mathbb{R}^{D_1 \times \ldots \times D_M}$ is denoted as $\matr{X}_{(m)}$ and is found by arranging the mode-$m$ fibers of $\tens{X}$ as the columns of $\matr{X}_{(m)}$. A \textbf{mode-$m$ product} is multiplying a tensor by a matrix in mode-$m$. Let $\tens{X} \in \mathbb{R}^{D_1 \times \ldots \times D_M}$ and $\matr{U} \in \mathbb{R}^{J \times D_m}$ then
\begin{align*}
\left[\tens{X} \times_m \matr{U}\right]_{d_1 \ldots d_{m-1}, j, d_{m+1} \ldots d_M} &= \sum_{d_m = 1}^{D_m} \left[\tens{X}\right]_{d_1 \ldots d_M} \left[ \matr{U} \right]_{j,d_m}. 
\end{align*}
We can also represent the mode-$m$ flattened tensor as
\begin{align*}
\tens{Y} = \tens{X} \times_m \matr{U} \Longleftrightarrow \matr{Y}_{(m)} = \matr{U}\matr{X}_{(m)}.
\end{align*}
The \textbf{vectorization} of the tensor $\tens{X}$ is defined as~\cite{ohlson2012,ohlson2013}
\begin{align*}
\ve\tens{X} &= \sum_{d_1=1}^{D_1} \cdots \sum_{d_M=1}^{D_M} \left[\tens{X}\right]_{d_1,\ldots,d_M}\vect{e}_{d_1}^{D_1} \circ \cdots \circ \vect{e}_{d_M}^{D_M},
\end{align*}
where $\circ$ denotes the Kronecker product~\cite{kolda2009} and $\vect{e}^{D_m}$ denotes the $D_m$-dimensional elementary (or unit basis) vector. We note that $\ve\tens{X}$ is a $(\prod_{m=1}^M D_m)$-dimensional vector. A tensor is called \textbf{symmetric} if the entries do not change under any permutation of the indices. The \textbf{rank} of a tensor $\tens{X}$ is the smallest number of rank-1 tensors that sums to the original tensor~\cite{kolda2015}. The \textbf{norm} of a tensor $\tens{X} \in \mathbb{R}^{D_1 \times \ldots \times D_M}$~\cite{anandkumar2012} is
\begin{align*}
\|\tens{X}\| &= \sqrt{\sum_{d_1=1}^{D_1}\cdots \sum_{d_M=1}^{D_M} \left[\tens{X}\right]^2_{d_1,\ldots,d_M}}.
\end{align*}
This is equivalent to the matrix Frobenius norm. We note that the norm $\|\tens{X}\|$ of a tensor $\tens{X}$ is equal to the $\mathcal{L}_2$-norm of the vectorized version of the same tensor, $\ve\tens{X}$. That is, $\|\tens{X}\| = \|\ve\tens{X}\|_2$. We also observe that for a vector $\vect{x} \in \mathbb{R}^D$, if the $\mathcal{L}_2$-norm $\|\vect{x}\|_2 = 1$ then
\begin{align*}
\|\vect{x} \otimes \cdots \otimes \vect{x}\| &= 1 \mbox{ because}\\
\left[\vect{x} \otimes \cdots \otimes \vect{x}\right]_{d_1,\ldots,d_M} &= [\vect{x}]_{d_1} \cdots [\vect{x}]_{d_M}.
\end{align*}
The \textbf{operator norm} of an $M$-way symmetric tensor $\tens{X} \in \mathbb{R}^{D \times \ldots \times D}$ is defined~\cite{anandkumar2012} as
\begin{align*}
\|\tens{X}\|_\mrm{op} &= \sup_{\|\vect{x}\|_2 = 1} \left| \tens{X}\left(\vect{x},\vect{x},\ldots,\vect{x}\right)\right|.
\end{align*}
Finally, a tensor $\tens{X} \in \mathbb{R}^{D_1 \times \ldots \times D_M}$ can be considered to be a multi-linear map~\cite{anandkumar2012} in the following sense: for a set of matrices $\{\matr{V}_m \in \mathbb{R}^{D_m \times K_m} : m = 1,2, \ldots, M\}$, the $(k_1, k_2, \ldots, k_M)$-th entry in the $M$-way tensor representation of $\tens{Z} = \tens{X}\left(\matr{V}_1, \ldots, \matr{V}_M\right) \in \mathbb{R}^{K_1 \times \ldots \times K_M}$ is
\begin{align*}
\left[\tens{Z}\right]_{k_1 \ldots k_M} &=\sum_{d_1 \ldots d_M} \left[\tens{X}\right]_{d_1 \ldots d_M} \left[\matr{V}\right]_{d_1,k_1} \cdots \left[\matr{V}\right]_{d_M,k_M}.
\end{align*}
Therefore, we have
\begin{align*}
\tens{X}\left(\matr{V}_1 \ldots \matr{V}_M\right) &= \tens{X} \times_1 \matr{V}_1^\top \cdots \times_M \matr{V}_M^\top.
\end{align*}

\section{Algebra for Various Calculations}\label{appendix:algebra}

\subsection{Calculation of $\hat{\matr{A}}$ in Section \ref{sec:dist_dppca}}\label{appendix:A_hat}
We show the calculation of  $\hat{\matr{A}}$ here in detail. \hyperlink{target:A_hat}{Recall} that the sites send their $\hat{\matr{A}}_s$ to the aggregator and the aggregator computes
\begin{align*}
\hat{\matr{A}} 	&= \frac{1}{S} \sum_{s=1}^S \left(\hat{\matr{A}}_s - \matr{F}_s\right) \\
						&= \frac{1}{S} \sum_{s=1}^S \left(\matr{A}_s + \matr{E}_s + \matr{F}_s + \matr{G}_s - \matr{F}_s\right) \\
                			&= \frac{1}{S} \sum_{s=1}^S \left(\matr{A}_s + \matr{G}_s\right) + \frac{1}{S} \sum_{s=1}^S \matr{E}_s \\
                			&= \frac{1}{S} \sum_{s=1}^S \left(\matr{A}_s + \matr{G}_s\right),
\end{align*}
where we used the relation $\sum_{s=1}^S \matr{E}_s = 0$. 

\subsection{Calculation of $\tilde{\tens{M}}_3$ in Section \ref{sec:dist_dpotd}}\label{appendix:M3_tilde}
\hypertarget{target:M3_tilde_calculation}{We} show the calculation of $\tilde{\tens{M}}_3$ here in detail. We \hyperlink{target:M3_tilde}{recall} that 
\begin{align*}
\tilde{\tens{M}}_3^s &=  \hat{\tens{M}}_3^s\left(\matr{W}, \matr{W}, \matr{W}\right).
\end{align*}
Additionally, at the aggregator, we compute 
\begin{align*}
\tilde{\tens{M}}_3 &= \frac{1}{S}\sum_{s=1}^S \left(\tilde{\tens{M}}_3^s - \tilde{\tens{F}}_3^s\right),
\end{align*}
where $\tilde{\tens{F}}_3^s = \tens{F}_3^s\left(\matr{W}, \matr{W}, \matr{W}\right)$. Then we have
\begin{align*}
	\tilde{\tens{M}}_3 &= \frac{1}{S}\sum_{s=1}^S 
		\Big(\tens{M}_3^s\left(\matr{W}, \matr{W}, \matr{W}\right) + \tens{E}_3^s\left(\matr{W}, \matr{W}, \matr{W}\right) + \\
	& \tilde{\tens{F}}_3^s + \tens{G}_3^s\left(\matr{W}, \matr{W}, \matr{W}\right) - \tilde{\tens{F}}_3^s\Big) \\
	&= \frac{1}{S}\sum_{s=1}^S \Big(\tens{M}_3^s\left(\matr{W}, \matr{W}, \matr{W}\Big) + \tens{G}_3^s\left(\matr{W}, \matr{W}, \matr{W}\right)\right) +\\
	& \left(\frac{1}{S}\sum_{s=1}^S\tens{E}_3^s\right)\left(\matr{W}, \matr{W}, \matr{W}\right) \\
	&= \frac{1}{S}\sum_{s=1}^S \left(\tens{M}_3^s\left(\matr{W}, \matr{W}, \matr{W}\right) + \tens{G}_3^s\left(\matr{W}, \matr{W}, \matr{W}\right)\right) \\
	&= \left(\frac{1}{S}\sum_{s=1}^S \tens{M}_3^s + \tens{G}_3^s\right)\left(\matr{W}, \matr{W}, \matr{W}\right),
\end{align*}
where we used the associativity of the multi-linear operation~\cite{anandkumar2012} and the relation $\sum_{s=1}^S \tens{E}_3^s = 0$.

\section{Applications of Orthogonal Tensor Decomposition \label{appendix:otd_examples}}
We review two examples from Anandkumar et al.~\cite{anandkumar2012}, which involve estimation of latent variables from observed samples. The lower-order moments obtained from the samples can be written as low-rank symmetric tensors.

\subsection{Single Topic Model (STM)}\label{sec:otd_examples_stm}
Let us consider an exchangeable bag-of-words model~\cite{anandkumar2012} for documents. Such exchangeable models can be viewed as mixture models in which there is a latent variable $h$ such that the $L$ words in the document $\vect{t}_1, \vect{t}_2, \ldots, \vect{t}_L$ are conditionally i.i.d. given $h$. Additionally, the conditional distributions are identical at all the nodes~\cite{anandkumar2012}. Let us assume that $h$ is the only topic of a given document, and it can take only $K$ distinct values. Let $D$ be the number of distinct words in the vocabulary, and $L \geq 3$ be the number of words in each document. The generative process for a document is as follows: the document's topic is drawn according to the discrete distribution specified by the probability vector $\vect{w} = \left[w_1, w_2, \ldots, w_K\right]^\top$. This is modeled as a discrete random variable $h$ such that
\begin{align*}
\Pr\left[h = k\right] &= w_k,
\end{align*}
for $k = 1, 2, \ldots, K$. Given the topic $h$, the document's $L$ words are drawn independently according to the discrete distribution specified by the probability vector $\vect{a}_h \in \mathbb{R}^D$. We represent the $L$ words in the document by $D$-dimensional random vectors $\vect{t}_l \in \mathbb{R}^D$. Specifically, if the $l$-th word is $d$, we set
\begin{align*}
\vect{t}_l &= \vect{e}_d \mbox{ for } l = 1, 2, \ldots, L,
\end{align*}
where $\vect{e}_1, \vect{e}_2, \ldots, \vect{e}_D$ are the standard coordinate basis vectors for $\mathbb{R}^D$. We observe that for any topic $k = 1,2, \ldots, K$
\begin{align*}
\mathbb{E}\left[\vect{t}_1 \otimes \vect{t}_2 | h = k\right] &= \sum_{i,j} \Pr\left[\vect{t}_1=i, \vect{t}_2=j | h = k \right] \vect{e}_i \otimes \vect{e}_j \\
							&= \mathbb{E}\left[\vect{t}_1 | h=k\right] \otimes \mathbb{E}\left[\vect{t}_2 | h=k\right] \\
                            &= \vect{a}_k \otimes \vect{a}_k.
\end{align*}
Now, we can define two moments in terms of the outer products of the probability vectors $\vect{a}_k$ and the distribution of the topics $h$
\begin{align}\label{eqn_moments}
\matr{M}_2 &= \sum_{k=1}^K w_k \vect{a}_k \otimes \vect{a}_k, \\
\tens{M}_3 &= \sum_{k=1}^K w_k \vect{a}_k \otimes \vect{a}_k \otimes \vect{a}_k.
\end{align}
The method proposed in \cite{anandkumar2012} to recover $\vect{w}$ and $\{ \vect{a}_k \}$ proceeds as follows: we observe $N$ documents. Each of the documents has number of words $L \geq 3$. The way we record what we observe is: we form an $D \times D \times D$ tensor whose $(d_1,d_2,d_3)$-th entry is the proportion of times we see a document with first word $d_1$, second word $d_2$ and third word $d_3$. 
In this setting, we can estimate the moments $\matr{M}_2$ and $\tens{M}_3$, defined in (\ref{eqn_moments}), from the observed data as:
\begin{align}
\matr{M}_2 &= \mathbb{E}[\vect{t}_1 \otimes \vect{t}_2],  \\
\tens{M}_3 &= \mathbb{E}[\vect{t}_1 \otimes \vect{t}_2 \otimes \vect{t}_3].
\end{align}
We then need to perform orthogonal tensor decomposition on $\tens{M}_3$ to recover $\vect{w}$ and $\{ \vect{a}_k \}$. 

\subsection{Mixture of Gaussians (MOG)}\label{sec:otd_examples_mog}
A similar example as the single topic model is the spherical mixture of Gaussians~\cite{anandkumar2012}. Let us assume that there are $K$ components and the component mean vectors are given by the set $\{ \vect{a}_1, \vect{a}_2,\ldots,\vect{a}_K \} \subset \mathbb{R}^D$. The probability of choosing component $k$ is $w_k$. We assume that the common covariance matrix is $\sigma^2\matr{I}_D$. However, the model can be extended to incorporate different covariance matrices for different component as well~\cite{anandkumar2012,kakade2013}. The $n$-th observation of the model can be written as
\begin{align*}
\vect{t}_n &= \vect{a}_h + \vect{z},
\end{align*}
where $h$ is a discrete random variable with $\Pr[h=k] = w_k$ and $\vect{z}$ is an $D$-dimensional random vector, independent from $h$, drawn according to $\mathcal{N}(0,\sigma^2\matr{I}_D)$. Let us denote the total number of observations by $N$. Without loss of generality, we assume that $\|\vect{a}_k\|_2 \leq 1$. Now, for $D\geq K$, it has been shown~\cite{kakade2013} that if we have estimates of the second and third order moments from the observations $\vect{t}_n$ as $\matr{M}_2 = \mathbb{E}[\vect{t} \otimes \vect{t}] - \sigma^2\matr{I}_D$ and
\begin{align*}
 \tens{M}_3 = \mathbb{E}[\vect{t} \otimes \vect{t} \otimes \vect{t}] - \\
 \sigma^2\sum_{d=1}^D \left(\mathbb{E}[\vect{t}]\otimes \vect{e}_d \otimes \vect{e}_d + \vect{e}_d \otimes \mathbb{E}[\vect{t}] \otimes \vect{e}_d + \vect{e}_d \otimes \vect{e}_d \otimes \mathbb{E}[\vect{t}]\right),
\end{align*}
then these moments are decomposable as
\begin{align*}
\matr{M}_2 &= \sum_{k=1}^K w_k \vect{a}_k \otimes \vect{a}_k, \tens{M}_3 = \sum_{k=1}^K w_k \vect{a}_k \otimes \vect{a}_k \otimes \vect{a}_k.
\end{align*}

\subsection{Orthogonal Decomposition of $\tens{M}_3$}
For both the STM and the MOG model, in order to decompose $\tens{M}_3$ using the tensor power method (\ref{eqn_tensor_power_method}), we need the $\vect{a}_k$'s to be orthogonal to each other. But, in general, they are not. To employ the orthogonal tensor decomposition, we can project the tensor onto some subspace $\matr{W} \in \mathbb{R}^{D\times K}$ to ensure $\matr{W}^\top \vect{a}_k$'s are orthogonal to each other. We note that, according to the multi-linear notation, we have
\begin{align}\label{eqn_multilinear_map}
\tens{M}_3(\matr{V}_1,\matr{V}_2,\matr{V}_3) &= \sum_{k=1}^K w_k \left(\matr{V}_1^\top \vect{a}_k\right) \otimes \left(\matr{V}_2^\top \vect{a}_k\right) \otimes \left(\matr{V}_3^\top \vect{a}_k\right).
\end{align}
In order to find $\matr{W}$, we can compute the SVD($K$) on the second-order moment $\matr{M}_2 \in \mathbb{R}^{D \times D}$ as
\begin{align*}
\matr{M}_2 &= \matr{U} \matr{D} \matr{U}^\top,
\end{align*}
where $\matr{U} \in \mathbb{R}^{D \times K}$ and $\matr{D} \in \mathbb{R}^{K \times K}$. We define $\matr{W} = \matr{U} \matr{D}^{-\frac{1}{2}} \in \mathbb{R}^{D \times K}$ and then compute the projection $\tilde{\tens{M}}_3 = \tens{M}_3(\matr{W},\matr{W},\matr{W})$. We note that $\tilde{\tens{M}}_3 \in \mathbb{R}^{K \times K \times K}$ is now orthogonally decomposable. We use the tensor power iteration (\ref{eqn_tensor_power_method}) on $\tilde{\tens{M}}_3$ to recover the weights $\{w_k\}$ and the component vectors $\{ \vect{a}_k \}$. The detail of the tensor power method can be found in Anandkumar et al.~\cite{anandkumar2012}.

\section{Differentially-private OTD}\label{appendix:dpotd}
We note that the key step in the orthogonal tensor decomposition algorithm is the mapping given by (\ref{eqn_tensor_power_method}). In order to ensure differential privacy for the orthogonal decomposition, we may either add noise at each iteration step scaled to the $\mathcal{L}_2$ sensitivity~\cite{dwork2014} of the operation given by (\ref{eqn_tensor_power_method}) or we can add noise to the tensor $\tens{X}$ itself just once. Adding noise in each iteration step might result in a poor utility/accuracy of the recovered eigenvectors and eigenvalues. We intend to add noise to the tensor itself prior to employing the tensor power method. In the following, we are showing the sensitivity calculations for the pooled data scenario. Extension to the distributed case is straightforward (replacing $N$ with $N_s$).

First, we focus on the exchangeable single topic model setup that we described in Appendix \ref{sec:otd_examples_stm}. We observe and record $N$ documents. Let us consider two sets of documents, which differ in only one sample (e.g., the last one). Let the empirical second-order moment matrices be $\matr{M}_2$ and $\matr{M}'_2$ and the third-order moment tensors be $\tens{M}_3$ and $\tens{M}'_3$, respectively, for these two sets. We consider the two tensors, $\tens{M}_3$ and $\tens{M}'_3$, as \emph{neighboring}. We observe that
\begin{align*}
\matr{M}_2 	&= \frac{1}{N} \sum_{n=1}^N \vect{t}_{1,n} \vect{t}_{2,n}^\top \\
            			&= \frac{1}{N} \sum_{n=1}^{N-1} \vect{t}_{1,n} \vect{t}_{2,n}^\top + \frac{1}{N} \vect{t}_{1,N} \vect{t}_{2,N}^\top, \\
\matr{M'}_2 	&= \frac{1}{N} \sum_{n=1}^{N-1} \vect{t}_{1,n} \vect{t}_{2,n}^\top + \frac{1}{N} \vect{t}'_{1,N} \vect{t'}_{2,N}^\top,         
\end{align*}
where $\vect{t}_{l,n}$ denotes the $l$-th word of the $n$-th document. Similarly, we observe
\begin{align*}
\tens{M}_3 	&= \frac{1}{N} \sum_{d=1}^D \vect{t}_{1,n} \otimes \vect{t}_{2,n} \otimes \vect{t}_{3,n}\\
            &= \frac{1}{N} \sum_{n=1}^{N-1} \vect{t}_{1,n} \otimes \vect{t}_{2,n} \otimes \vect{t}_{3,n} + \frac{1}{N} \vect{t}_{1,N} \otimes \vect{t}_{2,N} \otimes \vect{t}_{3,N}, \\
\tens{M'}_3 	&= \frac{1}{N} \sum_{n=1}^{N-1} \vect{t}_{1,n} \otimes \vect{t}_{2,n} \otimes \vect{t}_{3,n} + \frac{1}{N} \vect{t}'_{1,N} \otimes \vect{t}'_{2,N} \otimes \vect{t}'_{3,N}.
\end{align*}
As mentioned before, we perform the SVD on $\matr{M}_2$ first to compute $\matr{W}$. We intend to use the $\ag$ algorithm~\cite{dwork2014} to make this operation differentially private. We look at the following \hypertarget{target:sensitivity_M2}{quantity}:
\begin{align*}
\|\matr{M}_2-\matr{M'}_2\|_2 = \frac{1}{N} \|\vect{t}_{1,N} \vect{t}_{2,N}^\top - \vect{t'}_{1,N} \vect{t'}_{2,N}^\top\|_2 \\
= \frac{1}{N} \sup_{\|\vect{u}\|_2, \|\vect{v}\|_2 = 1} \Big\{\vect{u}^\top \left(\vect{t}_{1,N} \vect{t}_{2,N}^\top - \vect{t'}_{1,N} \vect{t'}_{2,N}^\top\right)\vect{v}\Big\} \\
\leq \frac{\sqrt{2}}{N} = \Delta_{2,S},
\end{align*}
because of the encoding $\vect{t}_{l,n} = \vect{e}_d$. For the mixture of Gaussians model, we note that we assumed $\| \vect{a}_k \|_2 \leq 1$ for all $k \in \{1, 2, \ldots, K\}$. To find a bound on $\|\matr{M}_2-\matr{M'}_2\|_2$, we consider the following: for identifiability of the $\{\vect{a}_k\}$, we have to assume that the $\vect{a}_k$'s are linearly independent. In other words, we are interested in finding the directions of the components specified by $\{\vect{a}_k\}$. In that sense, while obtaining the samples, we can divide the samples by a constant $\zeta$ such that $\|\vect{t}_n\|_2 \leq 1$ is satisfied. From the resulting second- and third-order moments, we will be able to recover $\{\vect{a}_k\}$ up to a scale factor. It is easy to show using the definition of largest eigenvalue of a symmetric matrix~\cite{boyd2004} that
\begin{align*}
\|\matr{M}_2-\matr{M'}_2\|_2 &= \frac{1}{N} \sup_{\|\vect{u}\|_2 = 1} \Big\{\vect{u}^\top \left(\vect{t}_N \vect{t}_N^\top - \vect{t'}_N \vect{t'}_N^\top\right)\vect{u}\Big\} \\
& = \frac{1}{N} \sup_{\|\vect{u}\|_2 = 1} \Big\{ \left|\vect{u}^\top\vect{t}_N\right|^2-\left|\vect{u}^\top\vect{t'}_N\right|^2 \Big\}\\
& \leq \frac{1}{N} = \Delta_{2,M},
\end{align*}
where the inequality follows from the relation $\|\vect{t}_n\|_2 \leq 1$. We note that the largest singular value of a matrix is the square root of the largest eigenvalue of that matrix. For the distributed case, as \hyperlink{orig:sensitivity_M2}{mentioned} before, the sensitivity of $\matr{M}_2^s$ depends only on the local sample size. We can therefore use the $\ag$ algorithm~\cite{dwork2014} (i.e., adding Gaussian noise with variance scaled to $\Delta_{2,S}$ or $\Delta_{2,S}$ to $\matr{M}_2$) to make the computation of $\matr{W}$ $(\epsilon_1,\delta_1)$-differentially private.

Now, we focus on the tensor $\tens{M}_3$. We need to project $\tens{M}_3$ on $\matr{W}$ before using the tensor power method. We can choose between making the projection operation differentially private, or we can make the $\tens{M}_3$ itself differentially private before projection. We found that making the projection operation differentially private involves addition of a large amount of noise and more importantly, the variance of the noise to be added depends on the alphabet size (or feature dimension) $D$ and the singular values of $\matr{M}_2$. Therefore, we choose to make the tensor itself differentially private. 
We are interested to find the sensitivity of the tensor valued function $f(\tens{M}_3) = \tens{M}_3$, which is simply the identity map. That is, we need to find the maximum quantity that this function can change if we replace the argument $\tens{M}_3$ with a neighboring $\tens{M'}_3$. For our exchangeable single topic model setup, \hypertarget{target:sensitivity_M3}{we have}
\begin{align*}
\|\tens{M}_3 - \tens{M'}_3\| &= \Big\|\frac{1}{N} \vect{t}_{1,N} \otimes \vect{t}_{2,N} \otimes \vect{t}_{3,N} - \frac{1}{N} \vect{t'}_{1,N} \otimes \vect{t'}_{2,N} \otimes \vect{t'}_{3,N} \Big\| \\
& \leq \frac{\sqrt{2}}{N} = \Delta_{3,S}, 
\end{align*}
because only one entry in the $D \times D \times D$ tensor $\vect{t}_{1,N} \otimes \vect{t}_{2,N} \otimes \vect{t}_{3,N}$ is non-zero (in fact, the only non-zero entry is 1). Now, for the mixture of Gaussians model, we define
\begin{align*}
\tens{T} &= \sigma^2\sum_{d=1}^D \left(\mathbb{E}[\vect{t}]\otimes \vect{e}_d \otimes \vect{e}_d + \vect{e}_d \otimes \mathbb{E}[\vect{t}] \otimes \vect{e}_d + \vect{e}_d \otimes \vect{e}_d \otimes \mathbb{E}[\vect{t}]\right) 
\end{align*}
Therefore, we have
\begin{align*}
\tens{T} - \tens{T}' &= \frac{\sigma^2}{N}\sum_{d=1}^D \Big(\left(\vect{t}_N-\vect{t}'_N\right)\otimes \vect{e}_d \otimes \vect{e}_d + \\
& \vect{e}_d \otimes \left(\vect{t}_N-\vect{t}'_N\right) \otimes \vect{e}_d + \vect{e}_d \otimes \vect{e}_d \otimes \left(\vect{t}_N-\vect{t}'_N\right)\Big) \\
\|\tens{T} - \tens{T}'\| &\leq \frac{3D\sigma^2}{N}\|\vect{t}_N - \vect{t}'_N\|_2 \\
& \leq \frac{6D\sigma^2}{N},
\end{align*}
where the last inequality follows from $\|\vect{t}_n\|_2 \leq 1$. Now, we have
\begin{align*}
\|\tens{M}_3 - \tens{M'}_3\| &= \Big\|\frac{1}{N} \vect{t}_{N} \otimes \vect{t}_{N} \otimes \vect{t}_{N} - \\
& \frac{1}{N} \vect{t'}_{N} \otimes \vect{t'}_{N} \otimes \vect{t'}_{N} + \tens{T} - \tens{T}'\Big\| \\
& \leq \frac{2}{N} + \frac{6D\sigma^2}{N} = \Delta_{3,M}, 
\end{align*}
because $\|\vect{t}_N \otimes \vect{t}_N \otimes \vect{t}_N\| = 1$ in our setup. \hyperlink{orig:sensitivity_M3}{Again}, we note that in the distributed setting, the sensitivity of the local $\tens{M}_3^s$ depends only on the local sample size. In the following, we present the two recently proposed algorithms from Imtiaz and Sarwate~\cite{imtiazOTD2018}. The first one uses a symmetric tensor made with i.i.d. entries from a Gaussian distribution, while the second proposed method uses a symmetric tensor made with entries taken from a sample vector drawn from an appropriate distribution. Both of the algorithms guarantee $(\epsilon,\delta)$-differential privacy.

\subsection{Addition of i.i.d. Gaussian Noise}\label{sec:agn}
\begin{algorithm}[t] 
	\caption{$\agn$ Algorithm \label{alg:agn}}
	\begin{algorithmic}[1]
    \Require Sample second-order moment matrix $\matr{M}_2 \in \mathbb{R}^{D\times D}$ and third-order moment tensor $\tens{M}_3 \in \mathbb{R}^{D \times D \times D}$, privacy parameters $\epsilon_1$, $\epsilon_2$, $\delta_1$, $\delta_2$.
    \State Generate $D \times D$ symmetric matrix $\matr{E}$ with $\{E_{ij} : i \in [D], j \leq i\}$ drawn i.i.d. from $\mathcal{N}(0,\tau_1^2)$ and $E_{ij} = E_{ji}$. Here, $\tau_1 = \frac{\Delta_2}{\epsilon_1} \sqrt{2\log\left(\frac{1.25}{\delta_1}\right)}$
    \State $\hat{\matr{M}}_2 \gets \matr{M}_2 + \matr{E}$
    \State Compute SVD$(K)$ on $\hat{\matr{M}}_2 = \matr{U}\matr{D}\matr{U}^\top$ and find $\matr{W} = \matr{U}\matr{D}^{-\frac{1}{2}}$
    \State Draw a sample vector $\vect{b} \in \mathbb{R}^{D_\mrm{sym}}$ whose entries are i.i.d $\sim \mathcal{N}(0,\tau_2^2)$, where $D_\mrm{sym}={D+2 \choose 3}$ and $\tau = \frac{\Delta_3}{\epsilon_2} \sqrt{2\log\left(\frac{1.25}{\delta_2}\right)}$
    \State Generate a symmetric tensor $\tens{E} \in \mathbb{R}^{D\times D \times D}$ from the entries of $\vect{b}$
    \State Compute $\hat{\tens{M}}_3 \gets \tens{M}_3 + \tens{E}$
    \State Compute $\tilde{\tens{M}}_{3} \gets \hat{\tens{M}}_3(\matr{W},\matr{W},\matr{W})$\\
    \Return The differentially private orthogonally decomposable tensor $\tilde{\tens{M}}_{3} $, projection subspace $\matr{W}$
    \end{algorithmic}
\end{algorithm}

The $\agn$ algorithm first uses the $\ag$ algorithm~\cite{dwork2014} to compute a differentially-private estimate of the second-order moment matrix $\matr{M}_2$. We note that the $\mathcal{L}_2$ sensitivity of $\matr{M}_2$ is given by $\Delta_{2,S}$ and $\Delta_{2,M}$ in Appendix \ref{appendix:dpotd} for the STM and MOG models, respectively. Therefore, we generate a $D \times D$ symmetric matrix $\matr{E}$ whose upper triangle and diagonal entries are sampled i.i.d. from $\mathcal{N}(0,\tau_1^2)$ and lower triangle entries are copied from upper triangle. Here, $\tau_1 = \frac{\Delta_{2,S}}{\epsilon_1} \sqrt{2\log\left(\frac{1.25}{\delta_1}\right)}$ for the STM and $\tau_1 = \frac{\Delta_{2,M}}{\epsilon_1} \sqrt{2\log\left(\frac{1.25}{\delta_1}\right)}$ for the MOG model. By computing the SVD($K$) on the $(\epsilon_1,\delta_1)$-differentially private estimate of $\matr{M}_2$ (denoted $\hat{\matr{M}}_2$), we find the subspace $\matr{W}$ required for whitening and also for recovering the component vectors $\{\vect{a}_k\}$. Next, we draw a $D_{\mrm{sym}}$-dimensional vector $\vect{b}$ with i.i.d. entries from $\mathcal{N}(0,\tau_2^2)$, where $D_{\mrm{sym}}={D+2 \choose 3}$ and $\tau_2 = \frac{\Delta_{3,S}}{\epsilon_2} \sqrt{2\log\left(\frac{1.25}{\delta_2}\right)}$ for the STM and $\tau_2 = \frac{\Delta_{3,M}}{\epsilon_2} \sqrt{2\log\left(\frac{1.25}{\delta_2}\right)}$ for the MOG model. In order to preserve the symmetry of the third-order tensor $\tens{M}_3$ upon noise addition, we form a symmetric tensor $\tens{E} \in \mathbb{R}^{D\times D \times D}$ from the entries of $\vect{b}$. This noise tensor is then added to $\tens{M}_3$ to achieve $\hat{\tens{M}}_3$, which is the $(\epsilon_2,\delta_2)$-differentially private estimate of $\tens{M}_3$. Finally, we project $\hat{\tens{M}}_3$ on the subspace $\matr{W}$ to get the orthogonally decomposable tensor $\tilde{\tens{M}}_{3}$. The detailed procedure is depicted in Algorithm \ref{alg:agn}.

\begin{theorem}[Privacy of $\agn$ Algorithm]
Algorithm \ref{alg:agn} computes an $(\epsilon_1+\epsilon_2,\delta_1+\delta_2)$-differentially private orthogonally decomposable tensor $\tilde{\tens{M}}_3$.
\end{theorem}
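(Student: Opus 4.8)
The plan is to split Algorithm~\ref{alg:agn} into its two noise-injection stages, show that each stage is an instance of the Gaussian mechanism~\cite{dwork2006,dwork2014} with the claimed parameters, and then glue them together with the composition theorem and post-processing invariance; the argument parallels the proof of Theorem~\ref{thm:distagn} with the correlated-noise cancellation switched off. First I would handle the second-moment stage: $\hat{\matr{M}}_2 = \matr{M}_2 + \matr{E}$ is exactly the $\ag$ algorithm of Dwork et al.~\cite{dwork2014} applied to the symmetric matrix $\matr{M}_2$, so once we substitute the $\mathcal{L}_2$ sensitivity $\Delta_2$ of $\matr{M}_2$ computed in Appendix~\ref{appendix:dpotd} (namely $\Delta_{2,S}$ for STM and $\Delta_{2,M}$ for MOG) into $\tau_1 = \frac{\Delta_2}{\epsilon_1}\sqrt{2\log(1.25/\delta_1)}$, the release of $\hat{\matr{M}}_2$ is $(\epsilon_1,\delta_1)$-differentially private. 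The $\mathrm{SVD}(K)$ and the map $(\matr{U},\matr{D})\mapsto\matr{U}\matr{D}^{-\frac{1}{2}}$ are data-independent, so by post-processing invariance the whitening matrix $\matr{W}$ is also $(\epsilon_1,\delta_1)$-differentially private.

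Next I would handle the third-moment stage: $\hat{\tens{M}}_3 = \tens{M}_3 + \tens{E}$. Identifying $\tens{M}_3$ with its vectorization $\ve\tens{M}_3 \in \mathbb{R}^{D^3}$, the relevant $\mathcal{L}_2$ sensitivity is the bound $\Delta_3$ on $\|\tens{M}_3 - \tens{M}'_3\|$ from Appendix~\ref{appendix:dpotd} ($\Delta_{3,S}$ for STM, $\Delta_{3,M}$ for MOG). The point that needs care is that $\tens{E}$ is \emph{not} an isotropic Gaussian on $\mathbb{R}^{D^3}$: it is assembled by symmetrizing the $D_\mrm{sym} = \binom{D+2}{3}$ i.i.d. samples $[\vect{b}]_d \sim \mathcal{N}(0,\tau_2^2)$, so in an orthonormal basis $\{\hat{u}_m\}$ of the symmetric subspace $V \subset \mathbb{R}^{D^3}$ its covariance is $\diag(c_m\tau_2^2)$, where $c_m \in \{1,3,6\}$ is the number of distinct arrangements of the index multiset $m$; in particular this covariance is $\succeq \tau_2^2 \matr{I}_V$. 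Because two neighboring datasets yield symmetric $\tens{M}_3, \tens{M}'_3$, their difference lies in $V$, and the standard Gaussian-mechanism privacy-loss analysis, projected onto the unit vector along $\ve\tens{M}_3 - \ve\tens{M}'_3$, reduces to a one-dimensional Gaussian with mean shift at most $\Delta_3$ and variance at least $\tau_2^2 = \left(\frac{\Delta_3}{\epsilon_2}\sqrt{2\log(1.25/\delta_2)}\right)^2$, which is precisely the $(\epsilon_2,\delta_2)$ condition. This is the argument carried out by Imtiaz and Sarwate~\cite{imtiazOTD2018}, and it is the main obstacle of the proof; everything else is routine privacy bookkeeping.

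Finally I would combine the stages. The randomness used for $\matr{E}$ and for $\vect{b}$ is independent and the third-moment perturbation does not read the output of the first stage, so by the composition theorem~\cite{dwork2006} the joint release $(\hat{\matr{M}}_2, \hat{\tens{M}}_3)$ --- equivalently $(\matr{W}, \hat{\tens{M}}_3)$ --- is $(\epsilon_1+\epsilon_2, \delta_1+\delta_2)$-differentially private. The returned tensor $\tilde{\tens{M}}_3 = \hat{\tens{M}}_3(\matr{W},\matr{W},\matr{W})$ is a deterministic function of this pair, so post-processing invariance gives that $\tilde{\tens{M}}_3$ is $(\epsilon_1+\epsilon_2,\delta_1+\delta_2)$-differentially private, as claimed.
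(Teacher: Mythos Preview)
Your proposal is correct and follows essentially the same route as the paper: apply the Gaussian mechanism to each of the two moment releases using the sensitivities $\Delta_2$ and $\Delta_3$ from Appendix~\ref{appendix:dpotd}, then invoke composition and post-processing to obtain the $(\epsilon_1+\epsilon_2,\delta_1+\delta_2)$ guarantee for $\tilde{\tens{M}}_3$. The only substantive difference is that the paper simply cites the Gaussian mechanism for the symmetric noise tensor $\tens{E}$ without further comment, whereas you explicitly justify why symmetrized Gaussian noise suffices via the covariance lower bound $\Sigma \succeq \tau_2^2 \matr{I}_V$ on the symmetric subspace; this extra care is sound and arguably fills a gap the paper leaves implicit.
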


\begin{proof}
From the Gaussian mechanism~\cite{dwork2006,dwork2014}, we know that if the $\mathcal{L}_2$ sensitivity of a vector valued function $f$ is denoted by $\Delta f$, then adding independently drawn random noise distributed as $\mathcal{N}(0, \tau^2)$ ensures $(\epsilon,\delta)$-differential privacy, where
\begin{align*}
\tau &= \frac{\Delta f}{\epsilon} \sqrt{2\log\left(\frac{1.25}{\delta}\right)}.
\end{align*}
Now, in order to make the function $f(\tens{M}_3) = \tens{M}_3$ differentially private, we need to find the $\mathcal{L}_2$ sensitivity of $f(\tens{M}_3)$. We computed the sensitivity of this function in Appendix \ref{appendix:dpotd}. That is
\begin{align*}
\|\tens{M}_3 - \tens{M'}_3\| &\leq \Delta_{3,S} \mbox{ for STM, and } \\
\|\tens{M}_3 - \tens{M'}_3\| &\leq \Delta_{3,M} \mbox{ for MOG}.
\end{align*}
We need to generate a symmetric tensor of the same dimension as $\tens{M}_3$ with i.i.d. entries from the distribution $\mathcal{N}(0, \tau^2)$, where $\Delta f = \Delta_{3,S}$ (or $\Delta_{3,M}$) and $(\epsilon, \delta) = (\epsilon_2, \delta_2)$. We note that a $D$-dimensional $M$-mode symmetric tensor is fully determined by
\begin{align}\label{eqn_unique_elements}
D_\mrm{sym} &= {D+M-1 \choose M}
\end{align}
elements~\cite{comon2008}. The computation of $\hat{\tens{M}}_3$ is $(\epsilon_2,\delta_2)$-differentially private. We project $\hat{\tens{M}}_3$ onto the subspace $\matr{W}$ to get the orthogonally decomposable tensor $\tilde{\tens{M}}_3$. We recall that we compute $\matr{W}$ from $\hat{\matr{M}}_2$, which is the $(\epsilon_1, \delta_1)$ differentially private approximate to $\matr{M}_2$. The computation of $\tilde{\tens{M}}_3$ for recovering the weights $\{w_k\}$ and $\{\vect{a}_k\}$ is therefore $(\epsilon_1+\epsilon_2,\delta_1+\delta_2)$-differentially private. The overall algorithm is shown in Algorithm \ref{alg:agn}. The above method can be considered as a tensor-analogue of the Analyze Gauss method for symmetric matrices~\cite{dwork2014}.
\end{proof}

\subsection{Addition of Vector Noise}\label{sec:avn}
\begin{algorithm}[t] 
	\caption{$\avn$ Algorithm \label{alg:avn}}
	\begin{algorithmic}[1]
    \Require Sample second-order moment matrix $\matr{M}_2 \in \mathbb{R}^{D\times D}$ and third-order moment tensor $\tens{M}_3 \in \mathbb{R}^{D \times D \times D}$, privacy parameters $\epsilon_1$, $\epsilon_2$, $\delta_1$, $\delta_2$.
    \State Generate $D \times D$ symmetric matrix $\matr{E}$ with $\{E_{ij} : i \in [D], j \leq i\}$ drawn i.i.d. from $\mathcal{N}(0,\tau_1^2)$ and $E_{ij} = E_{ji}$. Here, $\tau_1 = \frac{\Delta_2}{\epsilon_1} \sqrt{2\log\left(\frac{1.25}{\delta_1 + \delta_2}\right)}$
    \State $\hat{\matr{M}}_2 \gets \matr{M}_2 + \matr{E}$
    \State Compute SVD$(K)$ on $\hat{\matr{M}}_2 = \matr{U}\matr{D}\matr{U}^\top$ and find $\matr{W} = \matr{U}\matr{D}^{-\frac{1}{2}}$ \label{alg:avn:svdk}
    \State Draw a sample vector $\vect{b} \in \mathbb{R}^{D_{\mrm{sym}}}$ from the density given by (\ref{eqn_noise_vector}), where $D_{\mrm{sym}}={D+2 \choose 3}$ and $\beta = \frac{\epsilon_2}{\Delta_3}$
    \State Generate a symmetric tensor $\tens{E} \in \mathbb{R}^{D\times D \times D}$ from the entries of $\vect{b}$
    \State Compute $\hat{\tens{M}}_3 \gets \tens{M}_3 + \tens{E}$
    \State Compute $\tilde{\tens{M}}_{3} \gets \hat{\tens{M}}_3(\matr{W},\matr{W},\matr{W})$\\
    \Return The differentially private orthogonally decomposable tensor $\tilde{\tens{M}}_{3} $, projection subspace $\matr{W}$
    \end{algorithmic}
\end{algorithm}

The $\avn$ algorithm first uses the $\ag$ algorithm~\cite{dwork2014} to compute a $(\epsilon_1, \delta)$ differentially-private estimate of the second-order moment matrix $\matr{M}_2$ and then computes the subspace $\matr{W}$ required for whitening and also for recovering the component vectors of $\tens{M}_3$. Next, we draw a $D_\mrm{sym}$-dimensional vector $\vect{b}$ from the density~\cite{anand2011}:
\begin{align}\label{eqn_noise_vector}
f_b(\vect{b}) &= \frac{1}{\alpha} \exp \left( -\beta \|\vect{b}\|_2\right),
\end{align}
where $\alpha$ is a normalizing constant and $\beta$ is a parameter of the density. Later we will choose appropriate values for $\beta$ to ensure desired privacy levels. In order to preserve the symmetry of the third-order tensor $\tens{M}_3$ upon noise addition, we form a symmetric tensor $\tens{E} \in \mathbb{R}^{D\times D \times D}$ from the entries of $\vect{b}$. This noise tensor is then added to $\tens{M}_3$ to achieve $\hat{\tens{M}}_3$. Finally, we project $\hat{\tens{M}}_3$ on the subspace $\matr{W}$ to get the orthogonally decomposable tensor $\tilde{\tens{M}}_3$. The detailed procedure is shown in Algorithm \ref{alg:avn}.

\begin{theorem}[Privacy of $\avn$ Algorithm]
Algorithm \ref{alg:avn} computes an $(\epsilon_1+\epsilon_2,\delta)$-differentially private orthogonally decomposable tensor $\tilde{\tens{M}}_3$.
\end{theorem}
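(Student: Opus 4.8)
The plan is to follow the template of the $\agn$ privacy proof and to isolate the one place where $\avn$ differs, namely the noise used on the third-order moment tensor. Throughout, write $\Delta_2$ and $\Delta_3$ for the $\mathcal{L}_2$ sensitivities of the queries $f(\cdot)=\matr{M}_2$ and $f(\cdot)=\tens{M}_3$ for the model at hand ($\Delta_{2,S},\Delta_{3,S}$ for the STM and $\Delta_{2,M},\Delta_{3,M}$ for the MOG), all of which are established in Appendix~\ref{appendix:dpotd}. First I would dispatch the whitening step: $\hat{\matr{M}}_2=\matr{M}_2+\matr{E}$, where $\matr{E}$ is a symmetric matrix with i.i.d.\ free entries $\mathcal{N}(0,\tau_1^2)$ and $\tau_1=\frac{\Delta_2}{\epsilon_1}\sqrt{2\log\frac{1.25}{\delta_1+\delta_2}}$, so by the $\ag$/Gaussian mechanism~\cite{dwork2006,dwork2014} the release of $\hat{\matr{M}}_2$ is $(\epsilon_1,\delta_1+\delta_2)$-differentially private, and hence so is $\matr{W}=\matr{U}\matr{D}^{-\frac{1}{2}}$ since it is a deterministic function of $\hat{\matr{M}}_2$ (post-processing invariance).

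The substantive step is to show that the vector-noise mechanism on $\tens{M}_3$ is \emph{pure} $\epsilon_2$-differentially private, i.e.\ $(\epsilon_2,0)$-DP; this is precisely what allows $\avn$ to spend the entire $\delta$-budget $\delta_1+\delta_2$ on the whitening step and to use none of it on the tensor. Following~\cite{anand2011}, I would identify the symmetric tensor $\tens{M}_3$ with its $D_\mrm{sym}={D+2\choose 3}$ distinct coordinates, note that for two neighboring (symmetric) tensors the difference vector $\vect{v}$ satisfies $\|\vect{v}\|_2\le\Delta_3$, and bound the ratio of the densities of $\hat{\tens{M}}_3=\tens{M}_3+\tens{E}$ under the two inputs by $\exp\big(\beta(\|\vect{b}+\vect{v}\|_2-\|\vect{b}\|_2)\big)\le\exp(\beta\|\vect{v}\|_2)\le\exp(\beta\Delta_3)=\exp(\epsilon_2)$, where $\beta=\epsilon_2/\Delta_3$ and the first inequality is the triangle inequality; since the noise density has full support on $\mathbb{R}^{D_\mrm{sym}}$, there is no additive term, so the release of $\hat{\tens{M}}_3$ is $(\epsilon_2,0)$-DP.

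It then remains to combine the two releases. The pair $(\hat{\matr{M}}_2,\hat{\tens{M}}_3)$ is obtained by running an $(\epsilon_1,\delta_1+\delta_2)$-DP mechanism and an $(\epsilon_2,0)$-DP mechanism on the same data, so by the composition theorem~\cite{dwork2006} the pair is $(\epsilon_1+\epsilon_2,\delta_1+\delta_2)$-DP; setting $\delta=\delta_1+\delta_2$ matches the statement. Finally, $\matr{W}$ is a function of $\hat{\matr{M}}_2$ alone and $\tilde{\tens{M}}_3=\hat{\tens{M}}_3(\matr{W},\matr{W},\matr{W})$ is a function of $(\hat{\matr{M}}_2,\hat{\tens{M}}_3)$, so the released pair $(\tilde{\tens{M}}_3,\matr{W})$ is a deterministic post-processing of $(\hat{\matr{M}}_2,\hat{\tens{M}}_3)$ and inherits $(\epsilon_1+\epsilon_2,\delta)$-differential privacy.

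I expect the main obstacle to be the bookkeeping in the second step: one must ensure that the map $\vect{b}\mapsto\tens{E}$ that builds the symmetric noise tensor from $\vect{b}\in\mathbb{R}^{D_\mrm{sym}}$, together with the identification of $\tens{M}_3$ with its $D_\mrm{sym}$ free coordinates, is compatible with the sensitivity bound $\|\tens{M}_3-\tens{M}'_3\|\le\Delta_3$ proved in Appendix~\ref{appendix:dpotd} under the tensor (Frobenius-type) norm --- off-diagonal entries appear with multiplicity across index permutations, so one should either weight the repeated coordinates so that $\vect{b}\mapsto\tens{E}$ is an isometry onto the symmetric subspace, or absorb the resulting constant into $\Delta_3$ (and hence into the choice $\beta=\epsilon_2/\Delta_3$). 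Once that correspondence is fixed, the privacy calculation is the standard high-dimensional Laplace argument and everything else is composition and post-processing.
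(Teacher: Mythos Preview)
Your proposal is correct and follows essentially the same approach as the paper: both arguments show that the vector-noise mechanism on $\tens{M}_3$ is pure $(\epsilon_2,0)$-DP via the density-ratio/triangle-inequality calculation with $\beta=\epsilon_2/\Delta_3$, and then obtain $(\epsilon_1+\epsilon_2,\delta)$-DP by composing with the $(\epsilon_1,\delta)$-DP whitening step and invoking post-processing. Your closing caveat about the isometry between $\vect{b}\in\mathbb{R}^{D_\mrm{sym}}$ and the symmetric tensor under the Frobenius norm is well taken --- the paper's proof writes the density ratio in terms of $\|\ve\tens{Y}-\ve\tens{M}_3\|_2$ without addressing the multiplicity of off-diagonal coordinates either, so you have identified a genuine bookkeeping subtlety that the paper itself elides.
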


\begin{proof}
In order to make the function $f(\tens{M}_3) = \tens{M}_3$ differentially private, we consider the algorithm
\begin{align*}
\tens{Y} &= \tens{M}_3 + \tens{E},
\end{align*}
where $\tens{E}$ is a symmetric tensor of the same dimension as $\tens{M}_3$. We note that $\tens{E}$ consists $D_{\mrm{sym}}={D+2 \choose 3}$ number of unique entries. We propose to draw a vector $\vect{b} \in \mathbb{R}^{D_{\mrm{sym}}}$ according to the density in (\ref{eqn_noise_vector}) and then form a symmetric tensor $\tens{E}$ from the entries of $\vect{b}$. The probability of the event of drawing a particular sample from (\ref{eqn_noise_vector}) is the same as drawing a symmetric tensor with the same unique entries as the aforementioned vector from some equivalent density on symmetric tensors. Now, we are interested in the ratio of the density of $\tens{Y}$ under $\tens{M}_3$ and the density of $\tens{Y}$ under $\tens{M'}_3$
\begin{align*}
\frac{f(\tens{Y}|\tens{M}_3)}{f(\tens{Y}|\tens{M'}_3)} &= \frac{f_b(\tens{Y}-\tens{M}_3)}{f_b(\tens{Y}-\tens{M}_3)} \\
&= \frac{\exp \left( -\beta \|\ve \tens{Y} - \ve \tens{M}_3\|_2\right)}{\exp \left( -\beta \|\ve \tens{Y} - \ve \tens{M'}_3\|_2\right)} \\
&\leq \exp \left( \beta \|\ve \tens{M'}_3 - \ve \tens{M}_3\|_2\right) \\
&\leq \exp \left( \beta \|\tens{M'}_3 - \tens{M}_3\|\right) \\
&\leq \exp \left( \beta \Delta_3\right),
\end{align*}
where the inequality is introduced following from the triangle inequality of norms. Therefore, we observe that if we set $\beta = \frac{\epsilon_2}{\Delta_{3}}$, the algorithm $\tens{Y} = \tens{M}_3 + \tens{E}$ becomes $(\epsilon_2,0)$-differentially private. We set $\beta = \frac{\epsilon_2}{\Delta_{3,S}}$ for the STM and $\beta = \frac{\epsilon_2}{\Delta_{3,M}}$ for the MOG. We project the output of the algorithm onto $\matr{W}$ to obtain $\tilde{\tens{M}}_3$. The full algorithm is shown in Algorithm \ref{alg:avn} and is $(\epsilon_1+\epsilon_2,\delta)$-differentially private.
\end{proof}

We note here that we do not need to specify the normalizing constant $\alpha$ in (\ref{eqn_noise_vector}). This is because sampling from this distribution can be performed without any knowledge of $\alpha$. What we do to sample from the density (\ref{eqn_noise_vector}) is the following: we have to sample a radius and a direction. The direction we can pick uniformly by sampling $D_{\mrm{sym}}$-dimensional standard Gaussian vector with i.i.d. entries and normalizing it. The radius is Erlang distributed with parameters $(D_{\mrm{sym}},\beta)$. We can generate this by taking the sum of $D_{\mrm{sym}}$ exponential variables with parameter $\beta$. Note that the $\agn$ and $\avn$ algorithms essentially differ in one step -- the density from which the noise vector $\vect{b}$ is drawn from. However, the implications are further-reaching. With $\avn$, the computation of $\hat{\tens{M}}_3$ is pure $\epsilon_2$-DP. Therefore, if one uses an $\epsilon_1$-DP algorithm for Step \ref{alg:avn:svdk} in Algorithm \ref{alg:avn}, or if the tensor is already orthogonally decomposable (i.e., no need for whitening), then the $\avn$ algorithm would provide a pure $\epsilon$-DP algorithm for OTD.

\bibliography{refs.bib}

\begin{thebibliography}{10}
\providecommand{\url}[1]{#1}
\csname url@samestyle\endcsname
\providecommand{\newblock}{\relax}
\providecommand{\bibinfo}[2]{#2}
\providecommand{\BIBentrySTDinterwordspacing}{\spaceskip=0pt\relax}
\providecommand{\BIBentryALTinterwordstretchfactor}{4}
\providecommand{\BIBentryALTinterwordspacing}{\spaceskip=\fontdimen2\font plus
\BIBentryALTinterwordstretchfactor\fontdimen3\font minus
  \fontdimen4\font\relax}
\providecommand{\BIBforeignlanguage}[2]{{%
\expandafter\ifx\csname l@#1\endcsname\relax
\typeout{** WARNING: IEEEtran.bst: No hyphenation pattern has been}%
\typeout{** loaded for the language `#1'. Using the pattern for}%
\typeout{** the default language instead.}%
\else
\language=\csname l@#1\endcsname
\fi
#2}}
\providecommand{\BIBdecl}{\relax}
\BIBdecl

\bibitem{dwork2006}
C.~Dwork, F.~McSherry, K.~Nissim, and A.~Smith, ``{Calibrating Noise to
  Sensitivity in Private Data Analysis},'' in \emph{Proceedings of the Third
  Conference on Theory of Cryptography}, 2006, pp. 265--284.

\bibitem{SarwatePTAC:14sharing}
A.~D. Sarwate, S.~M. Plis, J.~A. Turner, M.~R. Arbabshirani, and V.~D. Calhoun,
  ``Sharing privacy-sensitive access to neuroimaging and genetics data: a
  review and preliminary validation,'' \emph{Frontiers in Neuroinformatics},
  vol.~8, no.~35, 2014.

\bibitem{anandkumar2012}
\BIBentryALTinterwordspacing
A.~Anandkumar, R.~Ge, D.~Hsu, S.~M. Kakade, and M.~Telgarsky, ``{Tensor
  Decompositions for Learning Latent Variable Models},'' \emph{J. Mach. Learn.
  Res.}, vol.~15, no.~1, pp. 2773--2832, Jan. 2014. [Online]. Available:
  \url{http://dl.acm.org/citation.cfm?id=2627435.2697055}
\BIBentrySTDinterwordspacing

\bibitem{kolda2009}
T.~G. Kolda and B.~W. Bader, ``{Tensor Decompositions and Applications},''
  \emph{SIAM REVIEW}, vol.~51, no.~3, pp. 455--500, 2009.

\bibitem{carroll1970}
\BIBentryALTinterwordspacing
J.~D. Carroll and J.-J. Chang, ``{Analysis of Individual Differences in
  Multidimensional Scaling via an n-way Generalization of ``Eckart-Young''
  Decomposition},'' \emph{Psychometrika}, vol.~35, no.~3, pp. 283--319, 1970.
  [Online]. Available: \url{http://dx.doi.org/10.1007/BF02310791}
\BIBentrySTDinterwordspacing

\bibitem{harshman1970}
R.~A. Harshman, ``{Foundations of the PARAFAC Procedure: Models and Conditions
  for an 'Explanatory' Multi-modal Factor Analysis},'' \emph{UCLA Working
  Papers in Phonetics}, vol.~16, no.~1, 1970.

\bibitem{tucker1966}
\BIBentryALTinterwordspacing
L.~R. Tucker, ``{Some Mathematical Notes on Three-mode Factor Analysis},''
  \emph{Psychometrika}, vol.~31, no.~3, pp. 279--311, 1966. [Online].
  Available: \url{http://dx.doi.org/10.1007/BF02289464}
\BIBentrySTDinterwordspacing

\bibitem{lathauwer2000}
\BIBentryALTinterwordspacing
L.~Lathauwer, B.~D. Moor, and J.~Vandewalle, ``{On the Best Rank-1 and Rank-(R1
  ,R2 ,. . .,RN) Approximation of Higher-Order Tensors},'' \emph{SIAM J. Matrix
  Anal. Appl.}, vol.~21, no.~4, pp. 1324--1342, Mar. 2000. [Online]. Available:
  \url{http://dx.doi.org/10.1137/S0895479898346995}
\BIBentrySTDinterwordspacing

\bibitem{kakade2013}
\BIBentryALTinterwordspacing
D.~J. Hsu and S.~M. Kakade, ``{Learning Mixtures of Spherical Gaussians: Moment
  Methods and Spectral Decompositions},'' \emph{Proceedings of the 4th
  Conference on Innovations in Theoretical Computer Science}, pp. 11--20, 2013.
  [Online]. Available: \url{http://doi.acm.org/10.1145/2422436.2422439}
\BIBentrySTDinterwordspacing

\bibitem{hsu2012}
\BIBentryALTinterwordspacing
D.~Hsu, S.~M. Kakade, and T.~Zhang, ``{A Spectral Algorithm for Learning Hidden
  Markov Models},'' \emph{Journal of Computer and System Sciences}, vol.~78,
  no.~5, pp. 1460 -- 1480, 2012. [Online]. Available:
  \url{http://www.sciencedirect.com/science/article/pii/S0022000012000244}
\BIBentrySTDinterwordspacing

\bibitem{kolda2015}
\BIBentryALTinterwordspacing
T.~G. Kolda, ``{Symmetric Orthogonal Tensor Decomposition is Trivial},'' in
  \emph{eprint arXiv:1503.01375}, 2015. [Online]. Available:
  \url{https://arxiv.org/abs/1503.01375}
\BIBentrySTDinterwordspacing

\bibitem{balcanOld}
Y.~Liang, M.-F. Balcan, and V.~Kanchanapally, ``{Distributed PCA and k-Means
  Clustering},'' \emph{Online,
  pages.cs.wisc.edu/$\sim$yliang/distributedPCAandCoreset.pdf}.

\bibitem{balcan2014}
\BIBentryALTinterwordspacing
M.-F. Balcan, V.~Kanchanapally, Y.~Liang, and D.~Woodruff, ``{Improved
  Distributed Principal Component Analysis},'' in \emph{Proceedings of the 27th
  International Conference on Neural Information Processing Systems}, ser.
  NIPS'14, 2014, pp. 3113--3121. [Online]. Available:
  \url{http://dl.acm.org/citation.cfm?id=2969033.2969174}
\BIBentrySTDinterwordspacing

\bibitem{borgne2010}
\BIBentryALTinterwordspacing
Y.~L. Borgne, S.~Raybaud, and G.~Bontempi, ``{Distributed Principal Component
  Analysis for Wireless Sensor Networks},'' \emph{CoRR}, vol. abs/1003.1967,
  2010. [Online]. Available: \url{http://arxiv.org/abs/1003.1967}
\BIBentrySTDinterwordspacing

\bibitem{bai2005}
\BIBentryALTinterwordspacing
Z.-J. Bai, R.~H. Chan, and F.~T. Luk, \emph{{Principal Component Analysis for
  Distributed Data Sets with Updating}}, Berlin, Heidelberg, 2005, pp.
  471--483. [Online]. Available: \url{https://doi.org/10.1007/11573937_51}
\BIBentrySTDinterwordspacing

\bibitem{macua2010}
S.~V. Macua, P.~Belanovic, and S.~Zazo, ``{Consensus-based Distributed
  Principal Component Analysis in Wireless Sensor Networks},'' in \emph{2010
  IEEE 11th International Workshop on Signal Processing Advances in Wireless
  Communications (SPAWC)}, June 2010, pp. 1--5.

\bibitem{imtiazDPCA2018}
H.~Imtiaz and A.~D. Sarwate, ``{Differentially Private Distributed Principal
  Component Analysis},'' in \emph{2018 IEEE International Conference on
  Acoustics, Speech and Signal Processing (ICASSP)}, April 2018.

\bibitem{feldman2013}
\BIBentryALTinterwordspacing
D.~Feldman, M.~Schmidt, and C.~Sohler, ``{Turning Big Data into Tiny Data:
  Constant-size Coresets for K-means, PCA and Projective Clustering},'' in
  \emph{Proceedings of the Twenty-fourth Annual ACM-SIAM Symposium on Discrete
  Algorithms}, ser. SODA '13, 2013, pp. 1434--1453. [Online]. Available:
  \url{http://dl.acm.org/citation.cfm?id=2627817.2627920}
\BIBentrySTDinterwordspacing

\bibitem{clarkson2017}
\BIBentryALTinterwordspacing
K.~L. Clarkson and D.~P. Woodruff, ``{Low-Rank Approximation and Regression in
  Input Sparsity Time},'' \emph{J. ACM}, vol.~63, no.~6, pp. 54:1--54:45, Jan.
  2017. [Online]. Available: \url{http://doi.acm.org/10.1145/3019134}
\BIBentrySTDinterwordspacing

\bibitem{halko2011}
\BIBentryALTinterwordspacing
N.~Halko, P.~G. Martinsson, and J.~A. Tropp, ``{Finding Structure with
  Randomness: Probabilistic Algorithms for Constructing Approximate Matrix
  Decompositions},'' \emph{SIAM Rev.}, vol.~53, no.~2, pp. 217--288, May 2011.
  [Online]. Available: \url{http://dx.doi.org/10.1137/090771806}
\BIBentrySTDinterwordspacing

\bibitem{imtiazOTD2018}
H.~Imtiaz and A.~D. Sarwate, ``{Improved Algorithms for Differentially Private
  Orthogonal Tensor Decomposition},'' in \emph{2018 IEEE International
  Conference on Acoustics, Speech and Signal Processing (ICASSP)}, April 2018.

\bibitem{imtiazDJICA2016}
H.~Imtiaz, R.~Silva, B.~Baker, S.~M. Plis, A.~D. Sarwate, and V.~Calhoun,
  ``{Privacy-preserving Source Separation for Distributed Data using
  Independent Component Analysis},'' in \emph{2016 Annual Conference on
  Information Science and Systems (CISS)}, March 2016, pp. 123--127.

\bibitem{anandkumar2016}
\BIBentryALTinterwordspacing
Y.~{Wang} and A.~{Anandkumar}, ``{Online and Differentially-Private Tensor
  Decomposition},'' \emph{ArXiv e-prints}, Jun. 2016. [Online]. Available:
  \url{https://arxiv.org/abs/1606.06237}
\BIBentrySTDinterwordspacing

\bibitem{jafar2018}
\BIBentryALTinterwordspacing
J.~Mohammadi, H.~Imtiaz, and A.~D. Sarwate, ``{Assisting Differentially Private
  Function Computation Using Correlated Noise},'' 2018, under review. [Online].
  Available: \url{http://eceweb1.rutgers.edu/~hi53/DDP_ver6_hi.pdf}
\BIBentrySTDinterwordspacing

\bibitem{blum2005}
\BIBentryALTinterwordspacing
A.~Blum, C.~Dwork, F.~McSherry, and K.~Nissim, ``{Practical Privacy: The {SuLQ}
  Framework},'' in \emph{Proceedings of the Twenty-fourth ACM
  SIGMOD-SIGACT-SIGART Symposium on Principles of Database Systems}, 2005, pp.
  128--138. [Online]. Available:
  \url{http://doi.acm.org/10.1145/1065167.1065184}
\BIBentrySTDinterwordspacing

\bibitem{dwork2014}
C.~Dwork, K.~Talwar, A.~Thakurta, and L.~Zhang, ``{Analyze {G}auss: Optimal
  Bounds for Privacy-preserving Principal Component Analysis},'' in
  \emph{Proceedings of the 46th Annual ACM Symposium on Theory of Computing},
  2014, pp. 11--20.

\bibitem{stewart1993}
G.~W. Stewart, ``{On the Early History of the Singular Value Decomposition},''
  \emph{SIAM Rev.}, vol.~35, no.~4, pp. 551--566, Dec. 1993.

\bibitem{comon2008}
\BIBentryALTinterwordspacing
P.~{Comon}, G.~{Golub}, L.-H. {Lim}, and B.~{Mourrain}, ``{{Symmetric Tensors
  and Symmetric Tensor Rank}},'' \emph{SIAM Journal on Matrix Analysis and
  Applications}, vol.~30, no.~3, pp. 1254--1279, 2008. [Online]. Available:
  \url{http://dx.doi.org/10.1137/060661569}
\BIBentrySTDinterwordspacing

\bibitem{SarwateC:13survey}
\BIBentryALTinterwordspacing
A.~D. Sarwate and K.~Chaudhuri, ``{Signal Processing and Machine Learning with
  Differential Privacy: Theory, Algorithms, and Challenges},'' \emph{IEEE
  Signal Processing Magazine}, vol.~30, no.~5, pp. 86--94, September 2013.
  [Online]. Available: \url{http://dx.doi.org/10.1109/MSP.2013.2259911}
\BIBentrySTDinterwordspacing

\bibitem{dwork2013algorithmic}
\BIBentryALTinterwordspacing
C.~Dwork and A.~Roth, ``{The Algorithmic Foundations of Differential
  Privacy},'' \emph{Foundations and Trends in Theoretical Computer Science},
  vol.~9, no. 3-4, pp. 211--407, 2013. [Online]. Available:
  \url{http://dx.doi.org/10.1561/0400000042}
\BIBentrySTDinterwordspacing

\bibitem{mnist}
Y.~Lecun, L.~Bottou, Y.~Bengio, and P.~Haffner, ``{Gradient-based Learning
  Applied to Document Recognition},'' \emph{Proceedings of the IEEE}, vol.~86,
  no.~11, pp. 2278--2324, Nov 1998.

\bibitem{Lichman:2013}
\BIBentryALTinterwordspacing
M.~Lichman, ``{{UCI} Machine Learning Repository},'' 2013. [Online]. Available:
  \url{http://archive.ics.uci.edu/ml}
\BIBentrySTDinterwordspacing

\bibitem{nguyen2012}
\BIBentryALTinterwordspacing
H.~V. Nguyen, V.~M. Patel, N.~M. Nasrabadi, and R.~Chellappa, ``{Sparse
  Embedding: A Framework for Sparsity Promoting Dimensionality Reduction},'' in
  \emph{Proceedings of the 12th European Conference on Computer Vision - Volume
  Part VI}, ser. ECCV'12.\hskip 1em plus 0.5em minus 0.4em\relax
  Springer-Verlag, 2012, pp. 414--427. [Online]. Available:
  \url{http://dx.doi.org/10.1007/978-3-642-33783-3_30}
\BIBentrySTDinterwordspacing

\bibitem{huang2014}
F.~Huang, S.~Matusevych, A.~Anandkumar, N.~Karampatziakis, and P.~Mineiro,
  ``{Distributed Latent Dirichlet Allocation via Tensor Factorization},'' in
  \emph{NIPS Optimization Workshop}, 2014.

\bibitem{hitchcock1927}
\BIBentryALTinterwordspacing
F.~L. Hitchcock, ``{The Expression of a Tensor or a Polyadic as a Sum of
  Products},'' \emph{Journal of Mathematics and Physics}, vol.~6, no. 1-4, pp.
  164--189, 1927. [Online]. Available:
  \url{http://dx.doi.org/10.1002/sapm192761164}
\BIBentrySTDinterwordspacing

\bibitem{hitchcock1928}
\BIBentryALTinterwordspacing
------, ``{Multiple Invariants and Generalized Rank of a P-Way Matrix or
  Tensor},'' \emph{Journal of Mathematics and Physics}, vol.~7, no. 1-4, pp.
  39--79, 1928. [Online]. Available:
  \url{http://dx.doi.org/10.1002/sapm19287139}
\BIBentrySTDinterwordspacing

\bibitem{cattell1944}
\BIBentryALTinterwordspacing
R.~B. Cattell, ``{'Parallel Proportional Profiles' and Other Principles for
  Determining the Choice of Factors by Rotation},'' \emph{Psychometrika},
  vol.~9, no.~4, pp. 267--283, 1944. [Online]. Available:
  \url{http://dx.doi.org/10.1007/BF02288739}
\BIBentrySTDinterwordspacing

\bibitem{ohlson2012}
\BIBentryALTinterwordspacing
M.~Singulla, M.~R. Ahmad, and D.~von Rosen, ``{More on the Kronecker Structured
  Covariance Matrix},'' \emph{Communications in Statistics - Theory and
  Methods}, vol.~41, no. 13-14, pp. 2512--2523, 2012. [Online]. Available:
  \url{http://www.tandfonline.com/doi/abs/10.1080/03610926.2011.615971}
\BIBentrySTDinterwordspacing

\bibitem{ohlson2013}
\BIBentryALTinterwordspacing
M.~Ohlson, M.~R. Ahmad, and D.~von Rosen, ``{The Multilinear Normal
  Distribution: Introduction and Some Basic Properties},'' \emph{Journal of
  Multivariate Analysis}, vol. 113, pp. 37 -- 47, 2013. [Online]. Available:
  \url{http://www.sciencedirect.com/science/article/pii/S0047259X11001047}
\BIBentrySTDinterwordspacing

\bibitem{boyd2004}
S.~Boyd and L.~Vandenberghe, \emph{{Convex Optimization}}.\hskip 1em plus 0.5em
  minus 0.4em\relax Cambridge university press, 2004.

\bibitem{anand2011}
\BIBentryALTinterwordspacing
K.~Chaudhuri, C.~Monteleoni, and A.~D. Sarwate, ``{Differentially Private
  Empirical Risk Minimization},'' \emph{J. Mach. Learn. Res.}, vol.~12, pp.
  1069--1109, Jul. 2011. [Online]. Available:
  \url{http://dl.acm.org/citation.cfm?id=1953048.2021036}
\BIBentrySTDinterwordspacing

\end{thebibliography}
\bibliographystyle{IEEEtran}

\end{document}